\documentclass{article}

 \usepackage[preprint]{neurips_2026}

\usepackage[utf8]{inputenc} 
\usepackage[T1]{fontenc}    
\usepackage{hyperref}       
\usepackage{url}            
\usepackage{booktabs}       
\usepackage{amsfonts}       
\usepackage{nicefrac}       
\usepackage{microtype}      
\usepackage{xcolor}         

\usepackage{graphicx}
\usepackage{subcaption}
\usepackage{placeins}
\usepackage{xurl}

\usepackage{amsmath}
\usepackage{amssymb}
\usepackage{mathtools}
\usepackage{amsthm}
\usepackage{tabularx}

\usepackage{makecell}
\usepackage{siunitx}
\usepackage{array}
\usepackage{enumitem}

\usepackage{wrapfig}
\usepackage{placeins}
\usepackage{multirow}

\definecolor{darkblue}{rgb}{0.0, 0.17, 0.58}
\hypersetup{colorlinks,breaklinks,linkcolor=red,citecolor=darkblue}
\usepackage{titletoc}

\newtheorem{proposition}{Proposition}

\newcolumntype{Y}{>{\raggedright\arraybackslash}X}

\title{Entropy Pacing Policy Optimization for Multi-Task Agentic
Reinforcement Learning}

\author{
 \textbf{Zetian Hu\textsuperscript{1}\textsuperscript{*}},
 \textbf{Shunyu Liu\textsuperscript{1}},
 \textbf{Junjie Zhang\textsuperscript{1}},
 \textbf{Yongcheng Jing\textsuperscript{1}},
 \textbf{Ting-En Lin\textsuperscript{2}},
\\
 \textbf{Yongbin Li\textsuperscript{2}},
 \textbf{Dacheng Tao\textsuperscript{1}\textsuperscript{\textdagger}}
\\
\\
 \textsuperscript{1}Generative AI Lab, College of Computing and Data Science,
 \\
 Nanyang Technological University, Singapore 639798
\\
 \textsuperscript{2}Tongyi Lab, Alibaba Group
}

\begin{document}

\maketitle

\begingroup
\renewcommand\thefootnote{\fnsymbol{footnote}}
\footnotetext[1]{Email: zetian.hu@ntu.edu.sg}
\footnotetext[2]{Corresponding authors: dacheng.tao@gmail.com}
\endgroup

\begin{abstract}
Recent breakthroughs of Reinforcement Learning~(RL) have highlighted its potential for complex agentic Large Language Model~(LLM) tasks. However, existing efforts largely focus on single-task settings, whereas real-world deployment necessitates a generalist agent capable of solving multiple tasks simultaneously.
In this work, we identify a critical yet underexplored phenomenon in multi-task agentic RL: \textit{different tasks can exhibit exploration-exploitation pace mismatch}. 
Specifically, easier tasks may converge early to low-entropy policies that hinder learning on harder tasks, while harder tasks can, in turn, push easier tasks back toward high-entropy exploration. 
This back-and-forth interaction creates inter-task entropy crossovers and frequent entropy spikes.
Inspired by this observation, we introduce Entropy Pacing Policy Optimization~(EPPO) for multi-task agentic LLMs, which coordinates entropy across tasks to stabilize multi-task optimization.
At the core of EPPO is a task-wise dynamic clipping mechanism that replaces the fixed clipping threshold in Group Relative Policy Optimization~(GRPO) with a task entropy-aware adaptive bound, tightening updates for over-confident tasks while relaxing them for under-explored ones.
Experiments on the multi-task agentic benchmarks demonstrate that the proposed EPPO yields results superior to its counterparts. 

\end{abstract}

\section{Introduction}
Recent advances in large language models (LLMs) have moved post-training beyond high-quality single responses toward interactive agents that act over long horizons and use tools to accomplish goals in diverse environments~\citep{plaat2025agentic,wang2025toward,luo2025large}.
Compared with single-turn tasks such as math problem solving~\citep{guo2025deepseek} or code generation~\citep{guo2024deepseek}, agentic settings involve sparse, delayed feedback and thus naturally call for Reinforcement Learning~(RL), where a policy must plan, explore, recover from errors, and adapt over multiple steps~\citep{zhang2025landscape,zhang2026prorl}.
RL-trained LLM agents have demonstrated strong performance across diverse tasks, including search and research~\citep{feng2025retool,jin2025search}, code generation~\citep{zhang2024o1}, and GUI navigation~\citep{liu2025infigui}.
However, most existing work optimizes agents for a single task or domain~\citep{wu2025webdancer,li2025webthinker}, which limits transfer and complicates real-world deployment. This motivates multi-task agentic RL: training one unified policy that can generalize across heterogeneous agentic tasks.

Scaling multi-task agentic RL remains difficult. Tasks of different difficulty are highly heterogeneous, and agentic environments vary widely in horizon length, feedback density, and tool-use structure, often causing imbalance and negative transfer in a shared policy~\citep{zhang2025agentrl}.
Existing multi-task approaches roughly fall into two families. 
(i) Expert-training and distillation methods such as MiMo-V2-Flash~\citep{xiao2026mimo} and DeepSeek-V3.2~\citep{liu2025deepseek} first train domain specialists, then distill or integrate their expertise into a unified model, which alleviates capability imbalance and improves knowledge integration across domains. 
(ii) End-to-end joint training methods train a single policy directly with multi-task balancing and scalable environments~\citep{zhang2025divide,li2025internbootcamp,dai2024cotbal}.
Among them, AgentRL~\citep{zhang2025agentrl} makes an important step for multi-task agentic RL by providing an efficient asynchronous training pipeline and a scalable environment stack, enabling large-scale heterogeneous training. It introduces task advantage normalization, which normalizes token-level advantages within each task batch to reduce inter-task variance and stabilize joint optimization. 

Despite these promising results, we observe a critical yet underexplored phenomenon during multi-task agentic GRPO training: tasks can drift into exploration-exploitation pace mismatch. Concretely, easier tasks may collapse early to low-entropy policies, which suppresses exploration for harder tasks; conversely, gradients from still-exploratory tasks can perturb the shared policy and destabilize already-converged tasks.
Empirically, this manifests as frequent inter-task entropy crossovers over time, together with late-stage entropy spikes that indicate instability. 
While entropy bonuses or KL penalties may partially stabilize entropy dynamics~\citep{cui2025entropy,jiang2025rethinking}, a single global knob is unlikely to fit all tasks simultaneously in the multi-task setting~\citep{yang2025dcpo}.
We attribute a key part of this failure mode to GRPO's global clipping range. As a task-agnostic trust-region proxy, one shared clipping constraint cannot simultaneously slow down over-confident tasks to preserve useful exploration and allow sufficiently strong updates for under-explored tasks. This mismatch amplifies coupling through shared parameters, producing the observed entropy crossings and spikes.

To address this pace mismatch, we propose \textbf{Entropy Pacing Policy Optimization (EPPO)}, which replaces GRPO's global fixed clipping range with a task-wise dynamic clip mechanism guided by each task's relative entropy pace. EPPO treats policy entropy as a proxy for exploration-exploitation state: (i) it tracks per-task entropy, anchors an initial reference, and computes an entropy collapse ratio to make pace comparable across heterogeneous tasks; (ii) it maps the standardized entropy collapse ratio of each task to a bounded clip adjustment via, enlarging clip range for under-explored tasks and shrinking it for over-confident ones to balance learning pace; (iii) it adds a stability-aware trend constraint that downscales clip range when entropy rises, damping oscillations and preventing abrupt policy shifts. EPPO introduces lightweight bookkeeping and plugs into the existing multi-task agentic GRPO by replacing the fixed clip with the task-wise dynamic clip mechanism.

Our contributions can be summarized as follows:
\begin{itemize}[leftmargin=*]
  \item We identify and characterize the pace mismatch phenomenon in multi-task agentic GRPO. It is an important yet underexplored limitation of shared-parameter training, where tasks exhibit divergent entropy-collapse dynamics, including inter-task entropy crossovers and frequent entropy spikes.
  \item We propose EPPO, a task-wise dynamic clipping mechanism that replaces GRPO’s fixed clipping range with an entropy-aware adaptive bound, coordinating exploration across tasks in multi-task agentic reinforcement learning.
  \item Experiments on agentic benchmarks show that EPPO improves average success rate over multi-task baselines and stabilizes training by reducing entropy crossovers and suppressing late-stage spikes.
\end{itemize}

\section{Related Work}

\textbf{Reinforcement Learning for Generalist Agents.}
With the rapid evolution of LLMs, research has moved from static, single-turn question answering toward dynamic, multi-turn generalist agents~\citep{plaat2025agentic,wang2025toward,luo2025large}. RL has been widely used to improve agentic capabilities in interactive environments~\citep{jin2025search,feng2025retool,li2025torl,modecrua2026multi}, but most efforts remain single-task or domain-specific, lacking the integration needed by generalist agents.

Prior work on generalist agents mainly follows two paradigms. Expert-training and distillation methods train domain specialists and then merge their capabilities into a unified model, as in MiMo-V2-Flash~\citep{xiao2026mimo} and DeepSeek-V3.2~\citep{liu2025deepseek}; however, this staged pipeline adds complexity, depends on teacher performance, and may underuse positive transfer. End-to-end multi-task RL instead optimizes a shared policy across heterogeneous environments~\citep{zhang2025divide,li2025internbootcamp}, with AgentRL~\citep{zhang2025agentrl} providing scalable infrastructure for large-scale joint training. Yet learning a unified policy remains difficult due to large variations in task difficulty and dynamics.

\textbf{Multi-Task Optimization Reinforcement Learning.}
A key bottleneck in multi-task RL is gradient conflict, where updates for one task can harm others~\citep{fernando2023mitigating}. Existing methods mitigate this through gradient manipulation, such as PCGrad~\citep{yu2020gradient} and CAGrad~\citep{liu2021conflict}, or through task-level normalization and weighting. AgentRL standardizes advantage scales across tasks via task advantage normalization~\citep{zhang2025agentrl}; CoTBal~\citep{dai2024cotbal} balances inter-task contribution and intra-task difficulty in multi-task visual instruction tuning. However, these methods mainly address gradient magnitude or direction, while often overlooking temporal disparities in learning dynamics, where simple tasks may converge early, and complex tasks remain under-explored.

\textbf{Entropy Dynamics and Adaptive Clipping.}
Policy entropy is a critical indicator of exploration capacity~\citep{cheng2025reasoning}. Recent studies show that entropy reduction is closely related to performance gains but may also cause premature entropy collapse~\citep{cui2025entropy}; in large action spaces, naive entropy regularization can further fail or destabilize training~\citep{jiang2025rethinking}. Since standard GRPO uses a static clipping range, recent work has explored adaptive clipping: DCPO adjusts token-level clipping bounds to encourage exploration of rare tokens~\citep{yang2025dcpo}, while BAPO adapts clipping bounds to balance positive and negative advantage updates and preserve entropy in off-policy RL~\citep{xi2025bapo}. Nevertheless, existing adaptive clipping methods mainly operate at the single-task level and do not explicitly handle inter-task entropy conflicts under shared-policy multi-task optimization.

\section{Preliminaries}
\label{sec:prelim}

\subsection{Multi-Task Reinforcement Learning}
We consider a set of $N$ tasks $\mathcal{T}=\{1,\dots,N\}$, where each task $i$ is an episodic Markov decision process $\mathcal{M}_i=\langle\mathcal{S},\mathcal{A},P_i,r_i,\gamma\rangle$. Here, $\mathcal{S}$ denotes the state space, $\mathcal{A}$ the action space, $P_i(s' \mid s,a)$ the task-specific transition dynamics, $r_i:\mathcal{S}\times\mathcal{A}\rightarrow\mathbb{R}$ the task-specific reward function, and $\gamma\in(0,1)$ the discount factor. A shared LLM policy $\pi_\theta(a | s, i)$, parameterized by $\theta$, is trained across tasks, optionally conditioned on a task identifier or environment context. The multi-task objective maximizes the expected returns:
\begin{equation}
J(\theta)=\sum_{i=1}^{N} \, \mathbb{E}_{\tau\sim \pi_\theta,\,P_i}\Big[\sum_{t=0}^{T_i-1}\gamma^t r_i(s_t,a_t)\Big].
\end{equation}
In agentic LLM settings, trajectories are multi-turn, and tasks are inherently heterogeneous, making exploration dynamics task-dependent.

\subsection{Group Relative Policy Optimization (GRPO)}
GRPO is a critic-free policy optimization method that replaces value-function baselines with group-relative reward normalization. For each query/prompt $q$, GRPO samples a group of $G$ outputs $\{o^{(g)}\}_{g=1}^{G}$ from an old policy $\pi_{\theta_{\text{old}}}$ and assigns each output a scalar reward $r^{(g)}$ provided by a verifier or a reward model. It then constructs a group-relative advantage using only within-group statistics:
\begin{equation}
\begin{split}
\hat A^{(g)}
= \frac{r^{(g)}-\mu_r}{\sigma_r+\varepsilon}, 
 \mu_r = \frac{1}{G}\sum_{g=1}^{G} r^{(g)},
\sigma_r
= \sqrt{\frac{1}{G}\sum_{g=1}^{G}\bigl(r^{(g)}-\mu_r\bigr)^2}.
\end{split}
\end{equation}

GRPO performs a token-level update with clipped importance ratios and typically includes a KL regularizer to a frozen reference policy $\pi_{\text{ref}}$:
\begin{equation}
\begin{split}
\max_{\theta}\;
\mathbb{E}\Bigg[
\frac{1}{G}\sum_{g=1}^{G}\frac{1}{|o^{(g)}|}\sum_{t=1}^{|o^{(g)}|}
\min\Big(\rho_{g,t}(\theta)\hat A^{(g)},\ \text{clip}(\rho_{g,t}(\theta),1-\epsilon,1+\epsilon)\hat A^{(g)}\Big)
\;-\;\lambda_{KL} D_{\text{KL}}\big(\pi_\theta\;\|\;\pi_{\text{ref}}\big)
\Bigg],
\end{split}
\end{equation}
where $\rho_{g,t}(\theta)=\pi_\theta(o^{(g)}_t\mid q,o^{(g)}_{<t})/\pi_{\theta_{\text{old}}}(o^{(g)}_t\mid q,o^{(g)}_{<t})$. By estimating the baseline from group scores, GRPO avoids training a critic while retaining PPO-like stability.

\subsection{Task Advantage Normalization.}
In multi-task agentic RL, task heterogeneity often leads to imbalanced gradient contributions under joint training. AgentRL~\citep{zhang2025agentrl} introduces task advantage normalization to stabilize multi-task optimization by normalizing advantages within each task. For task $i$, let $\mathcal{A}_i$ denote the set of all token-level advantage estimates in the current batch; each advantage is normalized as
\begin{equation}
\tilde{A}_i = \frac{A_i - \mu_i}{\sigma_i + \epsilon},
\end{equation}
where $\mu_i$ and $\sigma_i$ are the mean and standard deviation of $\mathcal{A}_i$,
respectively. 
This task-wise normalization enforces zero mean and unit variance for each task’s advantages, reducing inter-task variance and preventing tasks with larger advantage magnitudes from dominating updates. However, while it equalizes gradient scale across tasks, it does not account for differences in task learning dynamics, such as varying degrees of exploration or policy maturity, which motivates our subsequent extension.

\begin{figure*}[!t]
    \centering

    \includegraphics[width=0.8\textwidth]{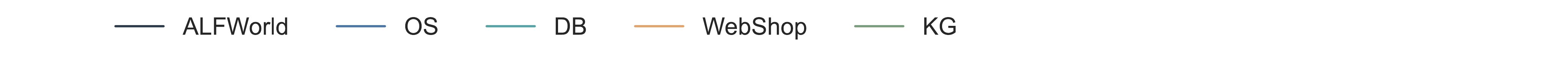}
    \vspace{-1em}

    \begin{subfigure}[t]{0.34\textwidth}
        \centering
        \includegraphics[width=\linewidth]{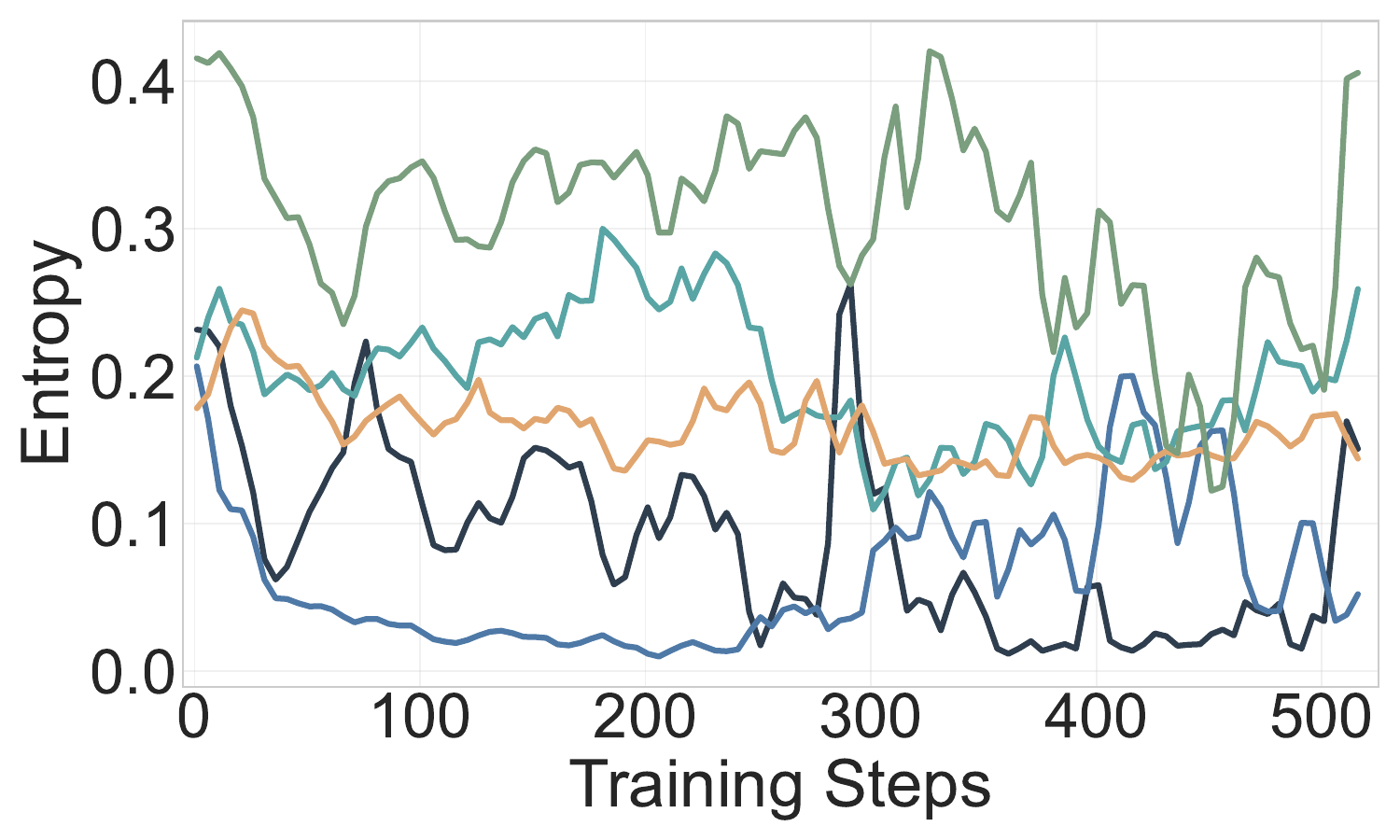}
        \caption{}
        \label{fig:entropy_baseline}
    \end{subfigure}
    \begin{subfigure}[t]{0.34\textwidth}
        \centering
        \includegraphics[width=\linewidth]{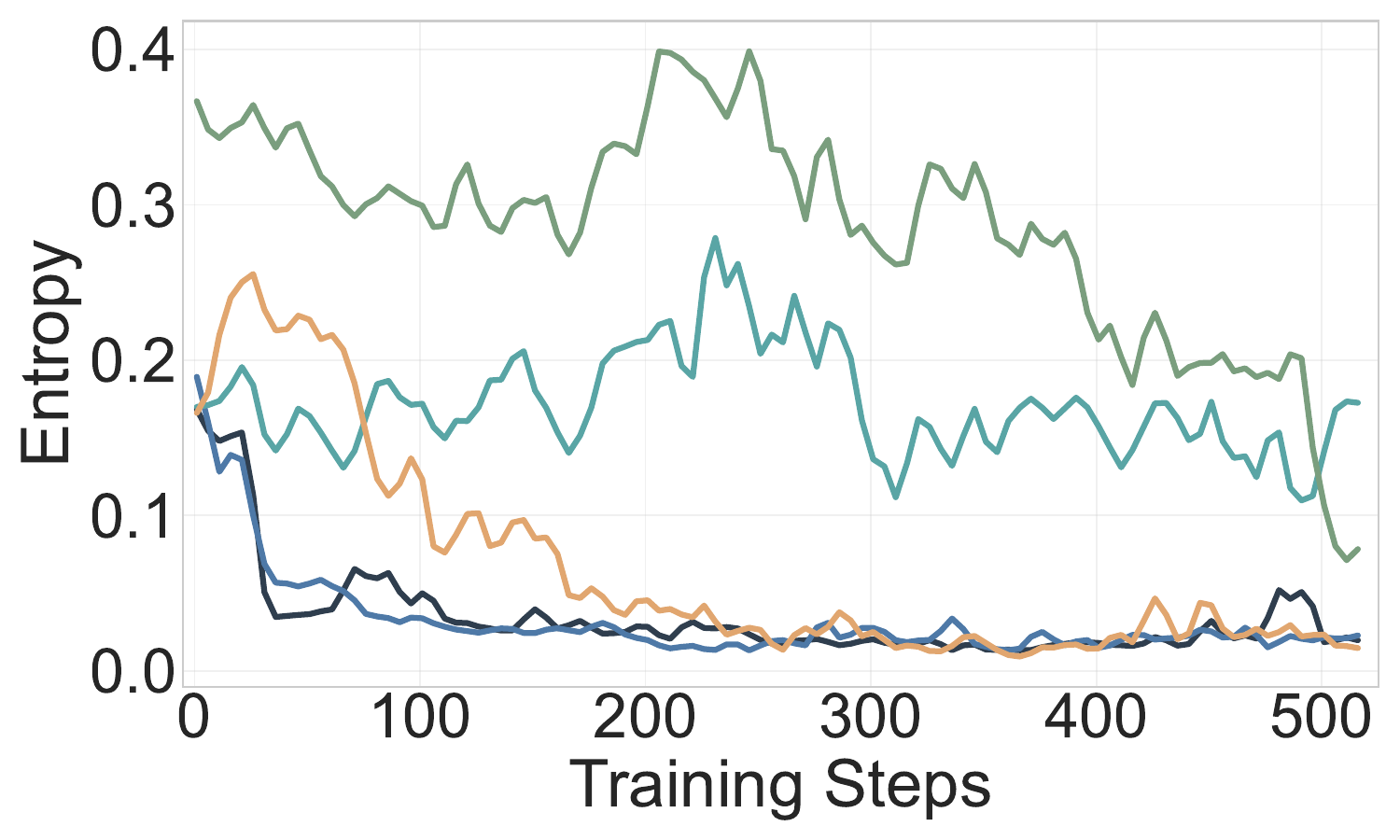}
        \caption{}
        \label{fig:entropy_our}
    \end{subfigure}
    \begin{subfigure}[t]{0.3\textwidth}
        \centering
        \includegraphics[width=\linewidth]{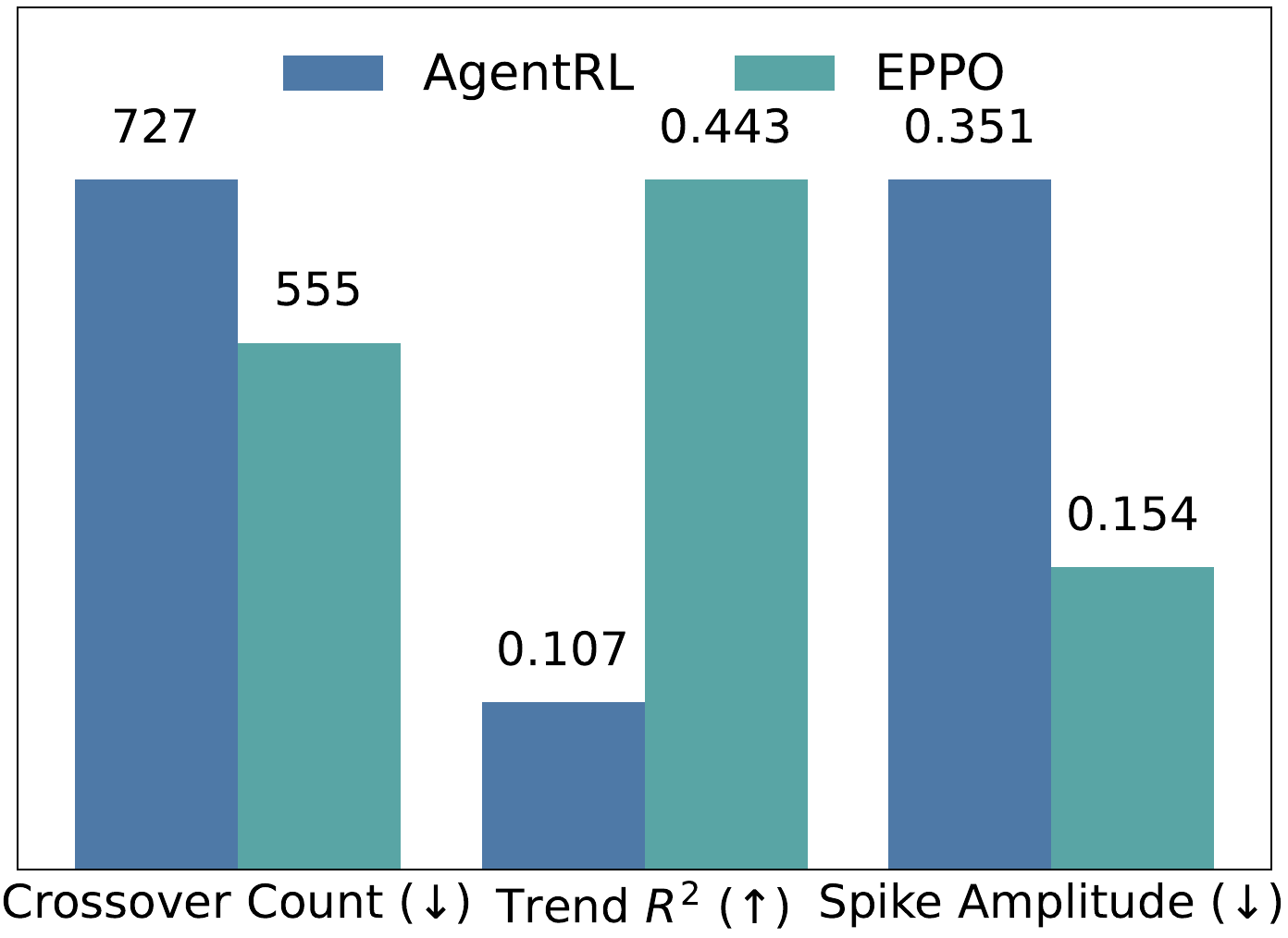}
        \caption{}
        \label{fig:entropy_metric}
    \end{subfigure}
    \caption{Task-wise policy entropy dynamics during multi-task agentic training. (a) The AgentRL exhibits pace mismatch, characterized by asynchronous entropy collapse with frequent inter-task crossovers and late-stage entropy spikes. (b) EPPO organizes task entropies into stable, persistent bands with fewer crossovers and suppressed spikes. (c) Quantitative diagnostics of entropy instability: crossover count sums pairwise sign-flip events (lower is better); trend $R^2$ is the average coefficient of determination from a linear fit of task entropy versus $\log t$ (higher is better); late spike amplitude is the maximum upward entropy jump in the last 30\% of training (lower is better).}
    \label{fig:entropy}
\end{figure*}

\section{Method}
\label{sec:method}

We identify a pace mismatch in multi-task agentic GRPO caused by divergent entropy collapse dynamics across tasks. Motivated by this, we propose a task-wise dynamic clipping mechanism that replaces GRPO’s fixed clipping range with an entropy-aware adaptive bound.

\subsection{Pace Mismatch Phenomenon}
\label{sec:phenomenon}
Under joint training with a shared policy, heterogeneous agentic tasks exhibit markedly asynchronous exploration-exploitation dynamics. Figure~\ref{fig:entropy_baseline} and Figure~\ref{fig:entropy_our} illustrate the task-wise entropy trajectories during AgentRL and EPPO training, respectively, while Figure~\ref{fig:entropy_metric} quantifies their stability using three diagnostics:
\textbf{(i) Crossover count}, which measures how often the relative entropy ordering between tasks flips and captures rank-order instability under shared-parameter coupling;
\textbf{(ii) Trend $R^2$}, which measures how well each task’s entropy follows a linear trend versus $\log t$ and serves as a proxy for trend consistency;
and \textbf{(iii) Late spike amplitude}, which measures the largest upward entropy jump in the last 30\% of training and captures late-stage destabilization.

Figure~\ref{fig:entropy_baseline} shows a clear pace mismatch: some tasks rapidly collapse to near-deterministic behavior early in training, while others remain high-entropy for much longer, indicating continued search for effective strategies. Despite sharing the same model parameters and update rule, tasks therefore occupy substantially different learning phases. This mismatch is further reflected by frequent inter-task crossings, suggesting strong coupling through shared parameters. Updates dominated by over-confident tasks can sharpen the shared policy and push other tasks toward premature exploitation, whereas updates from under-explored tasks can introduce distributional shifts that perturb tasks that have already stabilized.

A further symptom is the emergence of sporadic late-stage entropy spikes. Once some tasks have collapsed, corrective updates required by under-explored tasks may perturb the shared policy, destabilizing previously stable tasks and producing transient re-randomization. Together, the rapid entropy collapse, frequent crossovers, and late spikes indicate that a single shared trust-region constraint struggles to preserve task-specific exploration-exploitation states.

We identify GRPO’s global clipping range $\epsilon$ as a key contributor to this phenomenon. A single task-agnostic trust-region proxy cannot simultaneously slow down tasks that collapse too early to preserve useful exploration and allow sufficiently aggressive updates for tasks that remain under-explored. This motivates a pace-aware update mechanism that adapts the effective update constraint per task, using exploration state as a control signal. More detailed analyses are in Section~\ref{sec:analysis}.

\subsection{Entropy Pacing Policy Optimization}
\label{sec:eppo}

We propose Entropy Pacing Policy Optimization, which replaces GRPO's global clipping range with a task-wise dynamic clip $\epsilon_i(t)$ driven by each task's relative entropy pace. The key design choice is to treat entropy as an online indicator of a task's position on the exploration-exploitation trajectory, and to synchronize learning pace across tasks by modulating how aggressively each task can update the shared policy. 
We do not assume that entropy is equivalent to learning progress, nor that the clipping range directly represents exploration. Instead, entropy is used only as an online indicator of a task’s exploration-exploitation state, while $\epsilon_i(t)$ regulates the task-wise update pace by controlling how far the policy can move.
We provide a theoretical analysis of EPPO in Appendix~\ref{app:theory}.

As illustrated in Figure~\ref{fig:eppo_overview}, EPPO follows a modular pipeline with (A) an entropy tracker, (B) a global progress pacer, and (C) a stability-aware trend constraint, which jointly generate the
per-task clipping range $\epsilon_i(t)$ for the multi-task update engine.

\begin{figure*}[!t]
  \centering
  \includegraphics[width=1\textwidth]{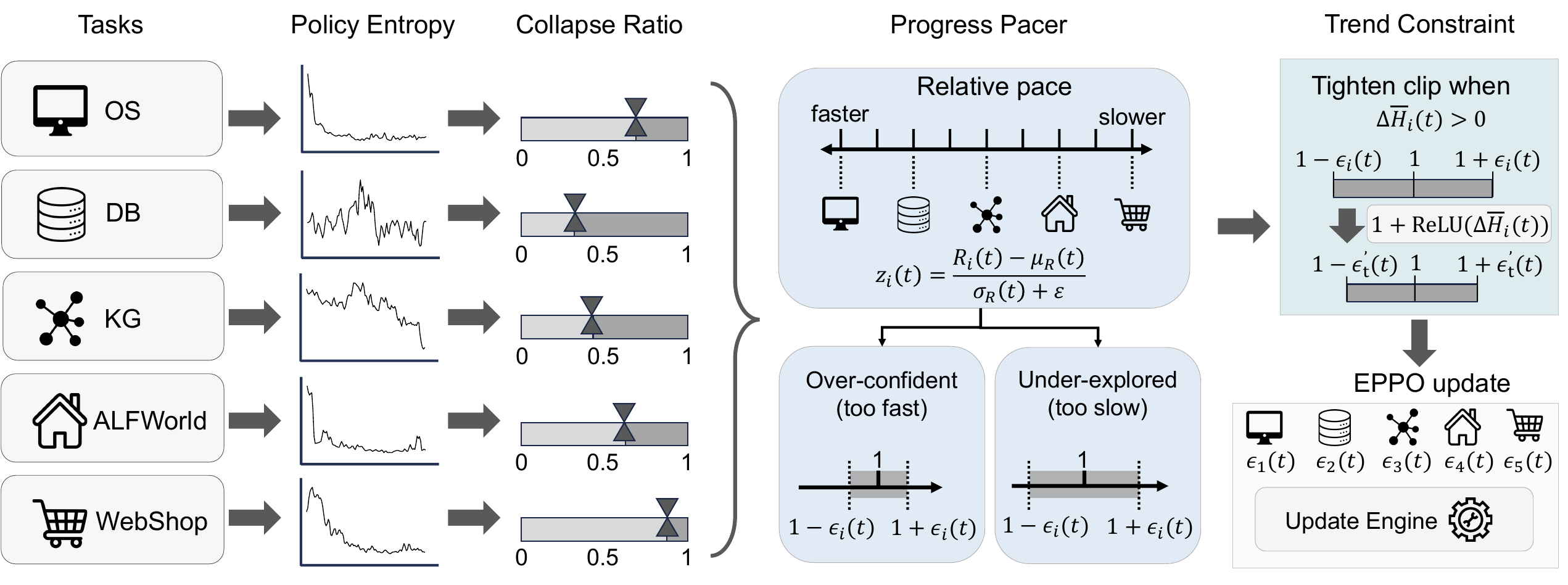}
  \caption{EPPO overview. Per-task entropy is tracked with EMA and converted to a normalized collapse ratio; a cohort-level pacer maps relative pace to task-wise dynamic clipping $\epsilon_i(t)$, while a trend constraint downscales updates when entropy rises to improve stability.}
  \label{fig:eppo_overview}
\end{figure*}

\textbf{Entropy tracker.}
We first build a stable, task-comparable pace signal from entropy, mitigating multi-turn noise across tasks.
For task $i$, we measure exploration via policy entropy:
\begin{equation}
H_i(t)=\mathbb{E}_{(s_t,i)}\left[-\sum_a \pi_\theta(a\mid s_t,i)\log \pi_\theta(a\mid s_t,i)\right],
\end{equation}
and maintain a smoothed estimate via an exponential moving average (EMA):
\begin{equation}
\bar H_i(t)=\beta \bar H_i(t-1) + (1-\beta)H_i(t).
\end{equation}
We store an initial entropy reference $H_i(0)$ and define the entropy collapse ratio
\begin{equation}
R_i(t)=\frac{\bar H_i(t)}{H_i(0)},
\end{equation}
so that $R_i(t)$ is dimensionless and directly comparable across tasks with different entropy scales.

\textbf{Progress pacer.}
We then convert relative pace differences into a task-specific clipping range to assign bounded update budgets across tasks. At update $t$, we standardize $R_i(t)$ using the mean $\mu_R(t)$ and standard deviation $\sigma_R(t)$ over $\{R_j(t)\}_{j=1}^N$ into task-wise $z$-score:
\begin{equation}\label{eq:z_score}
z_i(t)=\frac{R_i(t)-\mu_R(t)}{\sigma_R(t)+\varepsilon}.
\end{equation}
Using the cohort-normalized $z$-score makes pacing depend on relative progress rather than absolute entropy. 
We compute a bounded scaling factor and obtain the per-task clipping range:
\begin{equation}
s_i(t)=1+\alpha \tanh\big(z_i(t)\big), \qquad
\epsilon_i(t)=\epsilon_0 \cdot s_i(t),
\end{equation}
where $\epsilon_0$ is the base clipping range and $\alpha\in(0,1)$ controls the pacing strength. 
Since $\tanh(\cdot)\in[-1,1]$, the effective scaling is bounded as $s_i(t)\in[1-\alpha,\,1+\alpha]$, preventing excessively large or vanishing update constraints. This implements pacing: tasks with over-confident entropy collapse, indicated by smaller $R_i(t)$ and $z_i(t)<0$, receive $\epsilon_i(t)<\epsilon_0$ for more conservative updates, while under-explored tasks ($z_i(t)>0$) receive $\epsilon_i(t)>\epsilon_0$ to accelerate learning.

\textbf{Stability trend constraint.}
Finally, to suppress interference-induced oscillations, we detect the entropy trend and tighten the trust region. 
We compute the entropy trend by:
\begin{equation}
\Delta \bar H_i(t)=\bar H_i(t)-\bar H_i(t-1),
\end{equation}
and penalize the clip range when entropy increases:
\begin{equation}
\epsilon_i(t)\leftarrow \frac{\epsilon_i(t)}{1+\,\text{ReLU}\big(\Delta \bar H_i(t)\big)}.
\end{equation}
Intuitively, when a task enters a regime of increasing randomness, reducing $\epsilon_i(t)$ dampens abrupt policy shifts and discourages oscillatory behavior without removing the relative pacing effect.

Finally, EPPO replaces the global clip in GRPO with the task-wise clip:
\begin{equation}
\begin{split}
\mathcal{L}^{\text{EPPO}}_i(\theta)=
\mathbb{E}\Big[
\min\big(\rho_{g,t}(\theta)A_t^i,\ \text{clip}(\rho_{g,t}(\theta), 1-\epsilon_i(t),1+\epsilon_i(t))\,A_t^i\big)
\Big].
\end{split}
\end{equation} 
EPPO needs only lightweight bookkeeping for EMA and task-level statistics, and can be integrated into existing multi-task agentic GRPO pipelines by replacing the fixed clip range with $\epsilon_i(t)$. 

\section{Experiments}
\subsection{Benchmarks \& Setup}
\textbf{Benchmarks.}
We follow AgentRL~\citep{zhang2025agentrl} and evaluate on five multi-turn, interactive agentic tasks from AgentBench~\citep{liu2023agentbench}: Operating System~(OS), Database~(DB), Knowledge Graph~(KG), ALFWorld~\citep{shridhar2020alfworld}, and WebShop~\citep{yao2022webshop}, covering operating-system command execution, database querying, knowledge-graph navigation, text-based household manipulation, and web shopping, respectively. All environments are converted to a unified function-call interaction format, with standardized controller APIs for multi-turn interaction. We adopt the same data construction strategy as AgentRL: official datasets for ALFWorld and WebShop, and Self-Instruct–style synthetic data for OS, KG, and DB.

\textbf{Baselines.}
For training-free comparison, we include the baselines explicitly reported in AgentRL, covering representative closed-source API models (e.g., Claude-Sonnet~\citep{anthropic_claude_sonnet4_2025}, GPT-5~\citep{singh2025openai}, and OpenAI o-series~\citep{openai_o3_o4_mini_systemcard_2025}), strong open-source instruction-tuned models (Qwen2.5-Instruct series~\citep{qwen2024qwen25technicalreport}, DeepSeek-V3~\citep{liu2024deepseek}, and DeepSeek-R1~\citep{guo2025deepseek}), as well as prior agent training methods evaluated on AgentBench, such as Hephaestus~\citep{zhuang2025hephaestus} and AgentLM~\citep{zeng2024agenttuning}. For RL training comparison, we use AgentRL~\citep{zhang2025agentrl} as the primary baseline and further compare against adaptive clipping methods, including DCPO~\citep{yang2025dcpo} and BAPO~\citep{xi2025bapo}.

\textbf{Implementation Details.}
We adopt Qwen2.5-3B-Instruct as our backbone policy, conducting training on the AgentRL infrastructure on 8 NVIDIA H20 GPUs. For EPPO, we initialize a base clipping range of $\epsilon_0=0.2$ and set the pacing strength $\alpha=0.4$; the entropy EMA decay is set to $\beta=0.95$.  We report task success rate and aggregate results as the mean over four evaluations with standard deviation. More implementation details and hyperparameter settings are in Appendix~\ref{app:implementation}.

\subsection{Main Results}

\begin{table}[!t]
\centering
\caption{Main results on five-task multi-task training. We report the average success rate over four evaluations. Our reproduced results for AgentRL differ slightly from those reported in the original paper; a detailed analysis of the discrepancy is provided in Appendix~\ref{app:discrepancies}. ALF denotes ALFWorld.}
\vspace{1em}
\label{tab:main_results_5}
\begin{tabularx}{\linewidth}{Xcccccc}
\toprule
\textbf{Model} & \textbf{ALF} & \textbf{DB} & \textbf{KG} & \textbf{OS} & \textbf{WebShop} & \textbf{AVG} \\
\midrule
\multicolumn{7}{c}{\textit{API LLMs}} \\
\midrule
Claude-Sonnet-4 & 73.6 & 70.1 & 63.4 & 45.3 & 34.6 & 57.4 \\
Claude-Sonnet-4 Thinking & 69.0 & 68.4 & 64.4 & 51.0 & 38.3 & 58.2 \\
o3-mini & 28.4 & 56.5 & 51.8 & 35.1 & 32.7 & 40.9 \\
o4-mini & 32.6 & 63.4 & 32.4 & 41.8 & 28.5 & 39.7 \\
GPT-5 & 65.4 & 63.2 & 64.1 & 34.5 & 33.7 & 52.2 \\
\midrule
\multicolumn{7}{c}{\textit{Open LLMs}} \\
\midrule
DeepSeek-V3 & 31.9 & 58.4 & 14.0 & 53.0 & 23.4 & 36.1 \\
DeepSeek-R1 & 51.4 & 60.4 & 50.2 & 53.6 & 31.0 & 49.3 \\
Qwen2.5-3B-Instruct & 2.2 & 25.5 & 4.8 & 23.9 & 5.3 & 12.3 \\
Qwen2.5-14B-Instruct & 8.7 & 48.4 & 35.3 & 26.0 & 17.6 & 27.2 \\
Qwen2.5-32B-Instruct & 32.1 & 55.8 & 33.8 & 37.0 & 27.5 & 37.2 \\
Hephaestus-8B-Base & 30.0 & 32.3 & 16.0 & 20.8 & 60.5 & 31.9 \\
Hephaestus-8B-IFT & 46.0 & 29.7 & 21.2 & 20.8 & 63.9 & 36.3 \\
AgentLM-7B & 84.0 & 30.6 & 18.1 & 17.4 & 63.6 & 42.7 \\
AgentLM-13B & 76.0 & 33.7 & 26.8 & 18.1 & 70.8 & 45.1 \\
AgentLM-70B & 86.0 & 37.7 & 47.0 & 21.5 & 64.9 & 51.4 \\
\midrule
\multicolumn{7}{c}{\textit{RL Training}} \\
\midrule
AgentRL & 89.2 & 56.3 & \textbf{30.0} & 31.3 & 51.3 & 51.6 \\
DCPO & 0.0 & \textbf{61.0} & 10.6 & 27.2 & 50.5 & 29.9 \\
BAPO & 80.9 & 58.3 & 4.8 & 30.0 & 48.2 & 44.4 \\
\textbf{EPPO}  & \textbf{91.0}{\scriptsize$\pm$\text{1.1}} & 60.5{\scriptsize$\pm$\text{1.1}} & 29.6{\scriptsize$\pm$\text{1.4}} & \textbf{32.8}{\scriptsize$\pm$\text{0.6}} & \textbf{57.5}{\scriptsize$\pm$\text{0.5}} & \textbf{54.3}{\scriptsize$\pm$\text{0.9}} \\
\bottomrule
\end{tabularx}
\vspace{-0.1cm}
\end{table}

\begin{wraptable}{r}{0.695\textwidth}
\vspace{-2em}
\captionsetup{skip=2pt}
\centering
\caption{Main results on three-task multi-task training.}
\vspace{0.2em}
\label{tab:main_results_3}
\begin{tabularx}{\linewidth}{Xcccc}
\toprule
\textbf{Model} & \textbf{ALF} & \textbf{DB} & \textbf{OS} & \textbf{AVG} \\
\midrule
\multicolumn{5}{c}{\textit{API LLMs}} \\
\midrule
Claude-Sonnet-4 & 73.6 & 70.1 & 45.3 & 63.0 \\
GPT-5 & 65.4 & 63.2 & 34.5 & 54.3 \\
\midrule
\multicolumn{5}{c}{\textit{Open LLMs}} \\
\midrule
DeepSeek-R1 & 51.4 & 60.4 & 53.6 & 55.1 \\
Hephaestus-8B-IFT & 46.0 & 29.7 & 20.8 & 32.1 \\
Qwen2.5-3B-Instruct & 2.2 & 25.5 & 23.9 & 17.2 \\
AgentLM-70B & 86.0 & 37.7 & 21.5 & 48.4 \\
\midrule
\multicolumn{5}{c}{\textit{RL Training}} \\
\midrule
\multicolumn{5}{c}{\textit{Qwen2.5-3B-Instruct}} \\
AgentRL & 86.2 & 55.5 & 28.7 & 56.8 \\
\textbf{EPPO}  & \textbf{88.7}{\scriptsize$\pm$\text{1.1}} & \textbf{58.4}{\scriptsize$\pm$\text{0.7}} & \textbf{30.3}{\scriptsize$\pm$\text{1.3}} & \textbf{59.1}{\scriptsize$\pm$\text{1.0}} \\
\midrule
\multicolumn{5}{c}{\textit{Qwen2.5-0.5B-Instruct}} \\
AgentRL & 0.0 & 26.3 & \textbf{9.0} & 11.8 \\
\textbf{EPPO} & \textbf{1.1}{\scriptsize$\pm$\text{0.7}} & \textbf{30.5}{\scriptsize$\pm$\text{0.7}} & 8.0{\scriptsize$\pm$\text{0.3}} & \textbf{13.2}{\scriptsize$\pm$\text{0.6}} \\
\midrule
\multicolumn{5}{c}{\textit{Qwen3-8B}} \\
AgentRL & 84.1 & 55.3 & 45.6 & 61.6 \\
\textbf{EPPO} & \textbf{86.9}{\scriptsize$\pm$\text{1.7}} & \textbf{59.3}{\scriptsize$\pm$\text{0.6}} & 43.2{\scriptsize$\pm$\text{1.2}} & \textbf{63.1}{\scriptsize$\pm$\text{1.2}} \\
\bottomrule
\end{tabularx}
\vspace{-2em}
\end{wraptable}

\textbf{Overall Multi-Task Performance.}
Table~\ref{tab:main_results_5} reports five-task multi-task training results. EPPO achieves the best average success rate among RL methods, outperforming AgentRL across most tasks. The gains mainly come from DB, OS, and WebShop, while ALFWorld remains slightly improved and KG shows a mild trade-off. Compared with adaptive clipping baselines, EPPO is also more stable: DCPO collapses on several tasks, and BAPO exhibits stronger task imbalance. These results indicate that task-wise entropy pacing is more effective than sample or advantage level clipping for reducing inter-task interference.

Table~\ref{tab:main_results_3} shows the three-task setting.  EPPO again improves over AgentRL on average and achieves a higher average success rate across different backbone scales, suggesting that the proposed pacing mechanism remains beneficial when task conflict is reduced. We also evaluate an RLOO-based~\citep{ahmadian2024back} variant in Appendix~\ref{app:rloo_variant}.

\textbf{General Reasoning Transfer.}
We further evaluate whether multi-task agentic RL transfers to static reasoning benchmarks, including AIME24/25, MATH500~\citep{hendrycks2021measuring}, GSM8K~\citep{cobbe2021training}, MMLU-Pro~\citep{wang2024mmlu}, and GPQA~\citep{rein2024gpqa}, using model checkpoints trained on the five agentic tasks. As shown in Table~\ref{tab:static_benchmarks}, EPPO improves over AgentRL on MATH500, GSM8K, and GPQA, while AgentRL remains stronger on AIME$_{24+25}$. Compared with the base model, both RL methods improve mathematical and procedural reasoning but reduce MMLU-Pro performance, suggesting that agentic RL mainly benefits step-by-step problem solving rather than broad knowledge-heavy evaluation.

\FloatBarrier

\begin{table}[!t]
\centering
\caption{Reasoning benchmark results for checkpoints trained on five-task training.}
\vspace{0.2em}
\label{tab:static_benchmarks}
\begin{tabularx}{\linewidth}{Xccccc}
\toprule
\textbf{Model} & \textbf{AIME$_{24+25}$} & \textbf{MATH500} & \textbf{GSM8K} & \textbf{MMLU-Pro} & \textbf{GPQA} \\
\midrule
Qwen2.5-3B-Instruct
& 3.3 & 50.4 & 60.7 & \textbf{44.6} & 25.2 \\
AgentRL
& \textbf{6.6} & 56.0 & 66.6 & 38.0 & 27.0 \\
\textbf{EPPO}
& 5.0 & \textbf{59.8} & \textbf{71.0} & 39.2 & \textbf{27.2} \\
\bottomrule
\end{tabularx}
\vspace{-0.1cm}
\end{table}

\subsection{Analysis}
\label{sec:analysis}

\textbf{Entropy dynamics.}
We analyze the entropy dynamics as a proxy for each task's exploration-exploitation state in Section~\ref{sec:phenomenon}.
In AgentRL, Figure~\ref{fig:entropy_baseline} exhibits two characteristic instabilities: (i) strong inter-task entropy crossovers and (ii) late-stage entropy spikes.

In contrast, Figure~\ref{fig:entropy_our}  demonstrates cleaner entropy dynamics in EPPO.
Entropies form persistent bands with far fewer crossovers, suggesting task-wise pace is more monotonic and less easily overridden by other tasks’ updates.
This is consistent with reduced inter-task interference, tasks that should remain exploratory stay in a higher-entropy regime, while tasks ready to exploit remain low-entropy without frequent regime switches.
We also do not observe the large-amplitude late spikes seen in AgentRL. Instead, entropy evolves smoothly within bounded ranges throughout training, indicating fewer abrupt distributional shifts and mitigated late-stage destabilization.
Figure~\ref{fig:entropy_metric} supports this with fewer crossovers, a more consistent decay trend, and weaker late-stage spikes. 
Overall, these banded, low-crossover entropy dynamics without spike-like re-randomization improve the stability and separability of multi-task policy optimization, helping preserve task-specific learning dynamics under a shared policy.

\textbf{Learning curves.}
We further compare validation pass rates in Figure~\ref{fig:pass_rate}. Across both five-task and three-task settings, EPPO learns faster in the early stage and reaches stronger mid-training performance than AgentRL. Although AgentRL can partially narrow the gap later, EPPO remains competitive or better overall. This trend is consistent with the entropy analysis: by synchronizing task-wise learning pace and damping interference-induced oscillations, EPPO improves sample efficiency and training stability in multi-task agentic RL.

\begin{wrapfigure}{r}{0.66\textwidth}
    \centering
    \vspace{-2em}
    \captionsetup{skip=2pt}

    \includegraphics[width=0.35\linewidth]{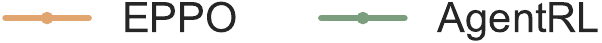}

    \begin{subfigure}[t]{0.49\linewidth}
        \centering
        \includegraphics[width=\linewidth]{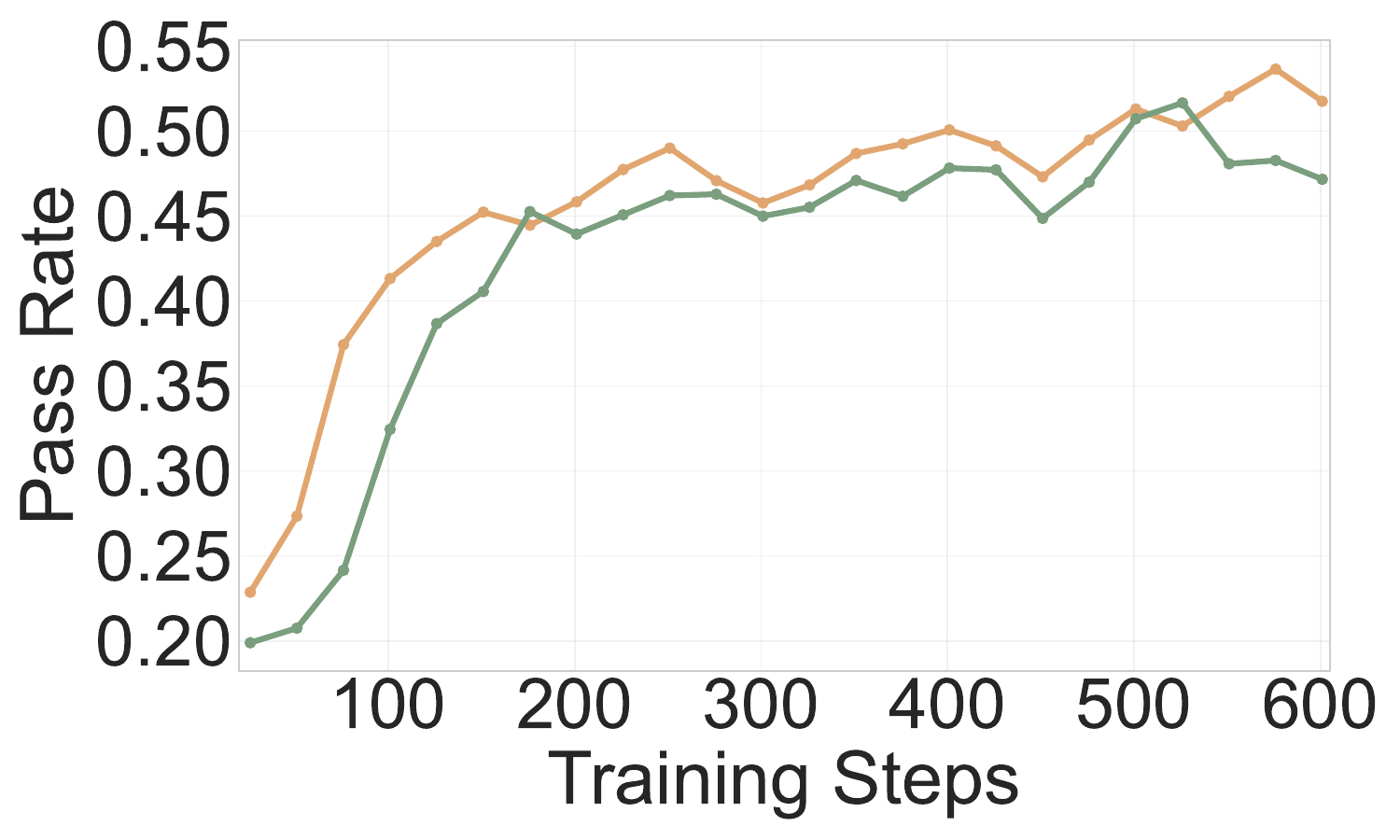}
        \caption{}
        \label{fig:waodk-pass_rate}
    \end{subfigure}
    \hfill
    \begin{subfigure}[t]{0.49\linewidth}
        \centering
        \includegraphics[width=\linewidth]{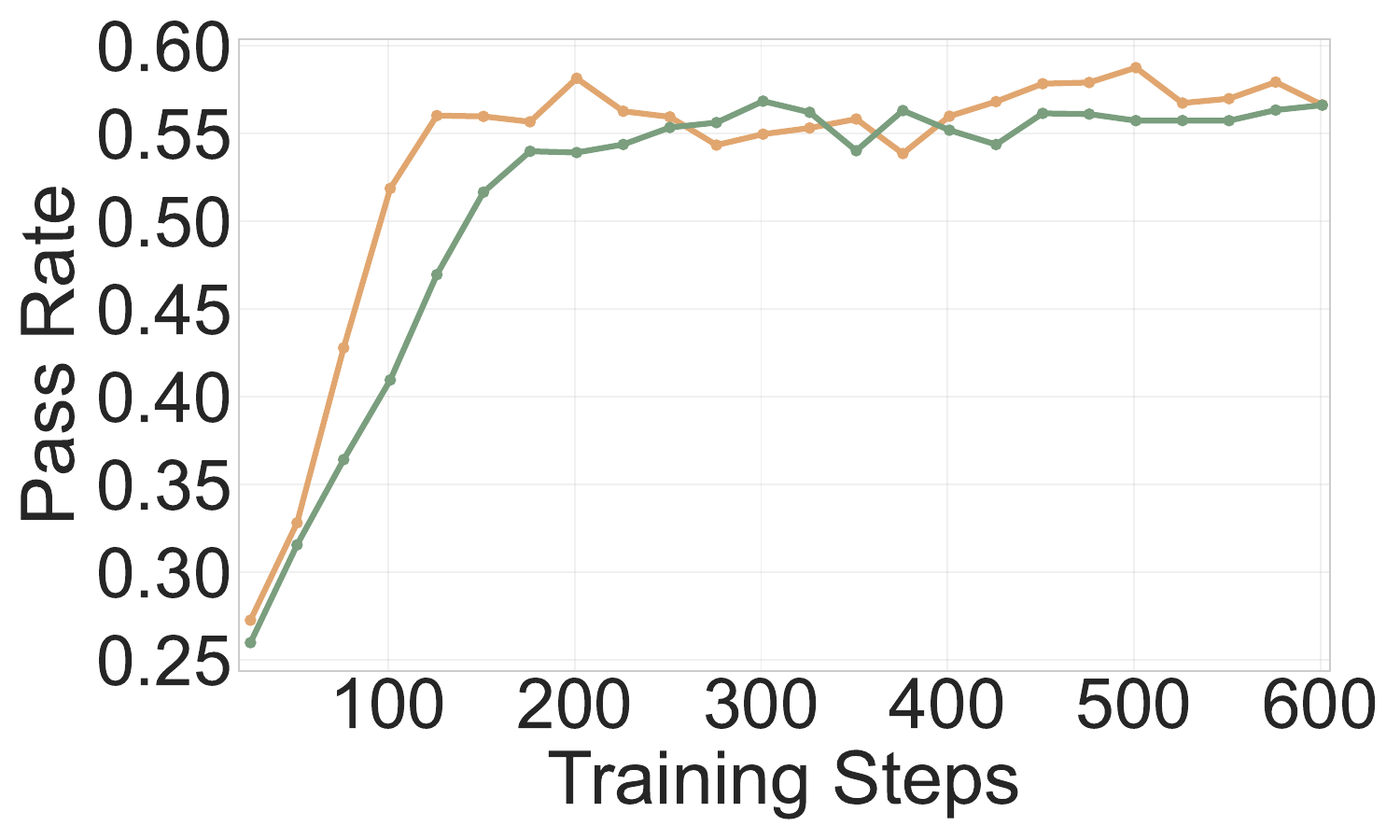}
        \caption{}
        \label{fig:aod-pass_rate}
    \end{subfigure}

    \caption{Validation learning curves (pass rate) comparing EPPO and AgentRL. (a) Five-task training. (b) Three-task training.}
    \label{fig:pass_rate}
    \vspace{-2em}
\end{wrapfigure}

\subsection{Ablation}
\label{sec:ablation}
We ablate two components of EPPO: task-wise dynamic clipping and the stability-aware trend constraint. Table~\ref{tab:ablation_joint} reports results under both five-task and three-task multi-task training.

\begin{wrapfigure}[15]{r}{0.66\textwidth}
    \vspace{-1.3em}
    \captionsetup{skip=3pt}
    \centering
    \vspace{-0.5em}

    \includegraphics[width=0.5\linewidth]{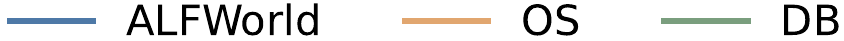}

    \begin{subfigure}[t]{0.49\linewidth}
        \centering
        \includegraphics[width=\linewidth]{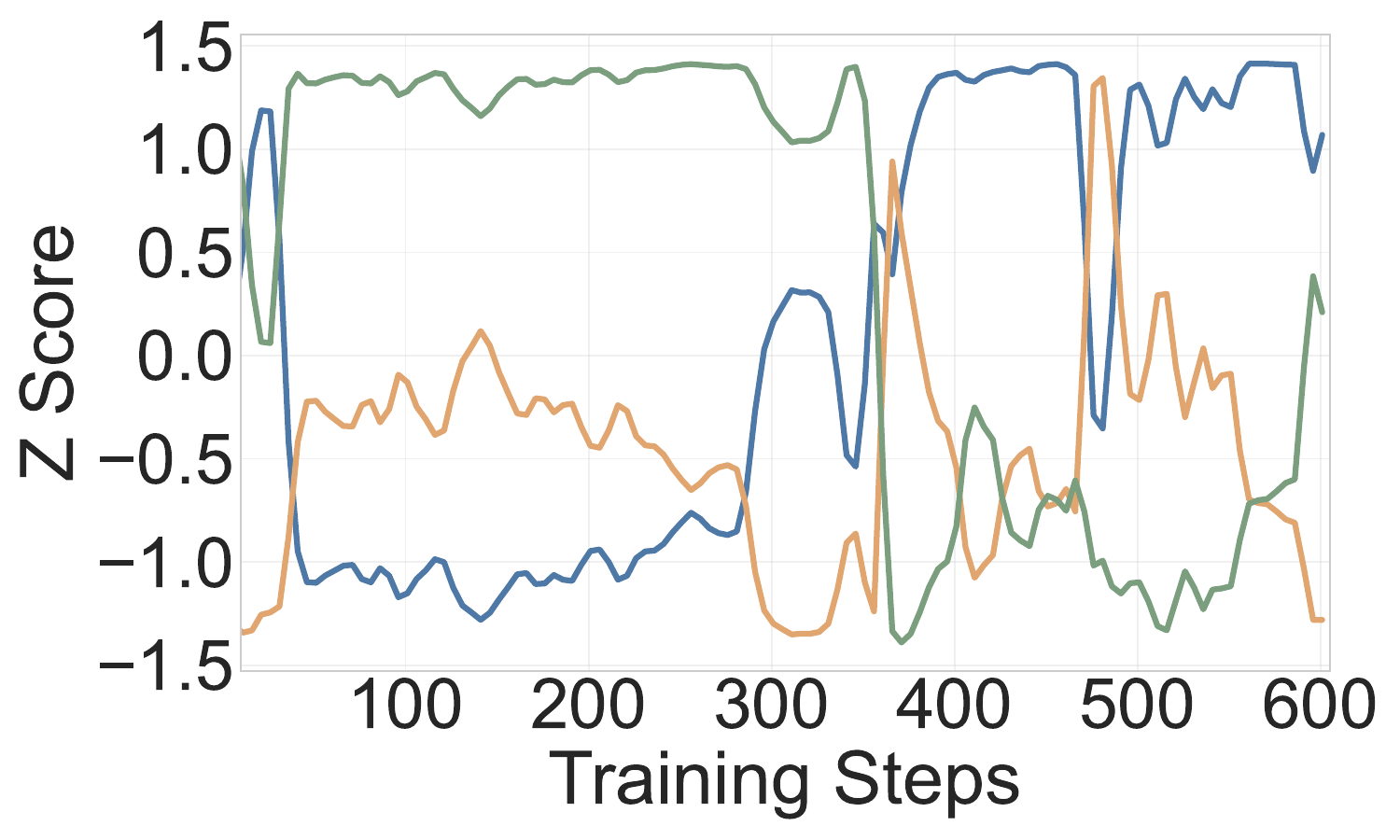}
        \caption{}
    \end{subfigure}
    \hfill
    \begin{subfigure}[t]{0.49\linewidth}
        \centering
        \includegraphics[width=\linewidth]{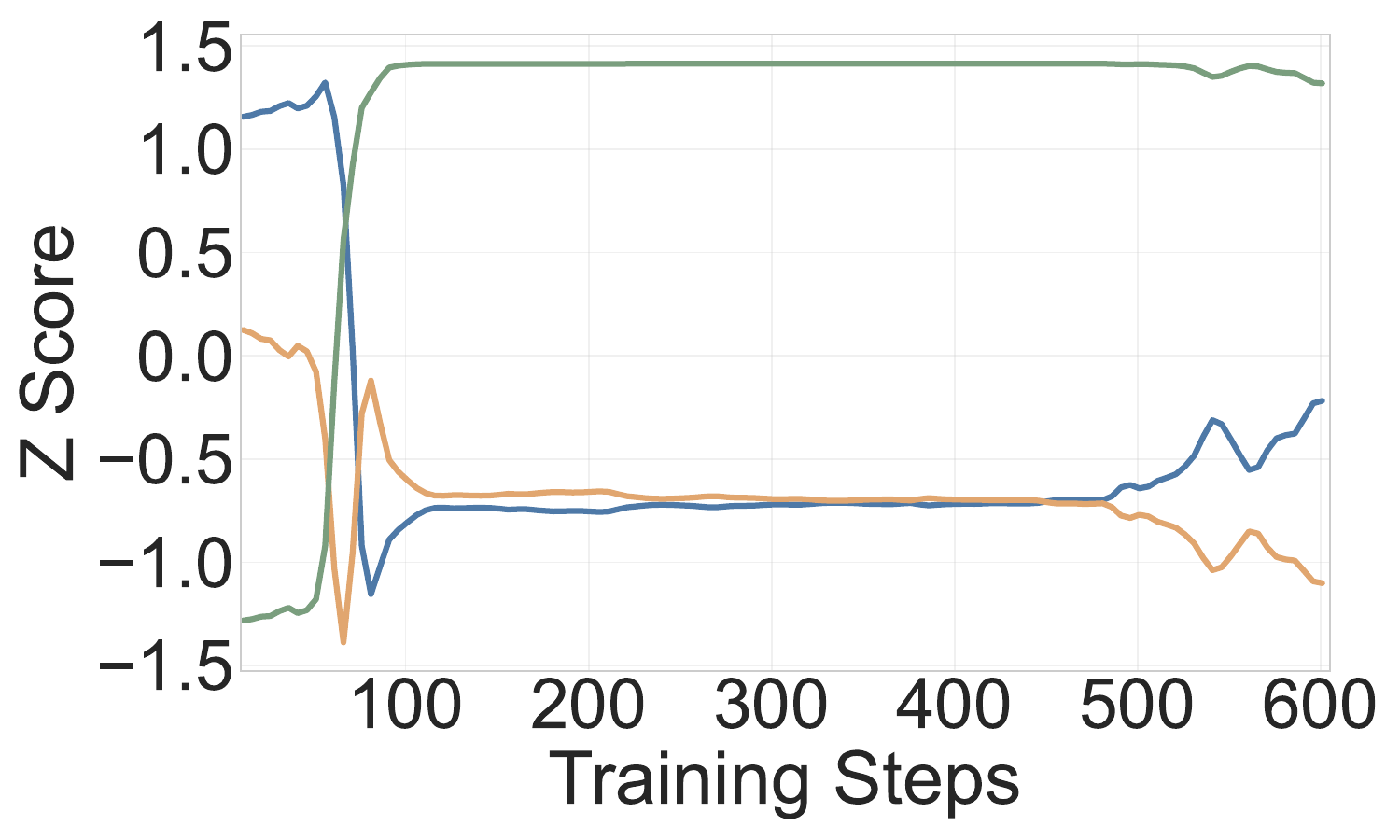}
        \caption{}
    \end{subfigure}

    \caption{Entropy $z$-score trajectories (three-task setting) diagnosing pacing stability. (a) Without the trend constraint, $z$-scores exhibit high-frequency oscillations and spikes. (b) With the trend constraint, $z$-scores remain smoother and within a narrower band, indicating stabilized pacing.}
    \label{fig:zscore}
    \vspace{-1em}
\end{wrapfigure}

\textbf{Results.}
Both components contribute to performance, with task-wise pacing being more important in the harder five-task setting. Removing task-wise dynamic clipping causes a clear average drop, particularly on tasks sensitive to under-exploration and inter-task competition, such as OS, WebShop, and KG. Removing the trend constraint also degrades performance, suggesting that heterogeneous task-wise updates require explicit damping. In the three-task setting, the gaps are smaller because task conflict is weaker, but full EPPO still achieves the best overall average.
We provide the corresponding task-wise entropy curves in Appendix~\ref{app:component_entropy}.

\FloatBarrier
\begin{table}[!t]
\centering
\caption{Ablation results on five-task training and three-task training. WS denotes WebShop.}
\vspace{0.2em}
\label{tab:ablation_joint}
\begin{tabularx}{\linewidth}{Xcccccc|cccc}
\toprule
& \multicolumn{6}{c|}{\textbf{Five-task}} & \multicolumn{4}{c}{\textbf{Three-task}} \\
\cmidrule(lr){2-7}\cmidrule(lr){8-11}
\textbf{Model} &
\textbf{ALF} & \textbf{DB} & \textbf{KG} & \textbf{OS} & \textbf{WS} & \textbf{AVG} &
\textbf{ALF} & \textbf{DB} & \textbf{OS} & \textbf{AVG} \\
\midrule
EPPO (full) & \textbf{91.0} & \textbf{60.5} & \textbf{29.6} & 32.8 & \textbf{57.5} & \textbf{54.3} & 88.7 & \textbf{58.4} & 30.3 & \textbf{59.1} \\
w/o task-wise clip & 84.1 & 59.5 & 21.3 & 27.8 & 51.5 & 48.8 & \textbf{90.1} & 56.5 & 28.7 & 58.4 \\
w/o trend constraint & 86.0 & 56.9 & 7.1 & \textbf{34.2} & 51.3 & 47.1 & 88.2 & 53.5 & \textbf{31.1} & 57.6 \\

\bottomrule
\end{tabularx}
\vspace{-0.5cm}
\end{table}

\textbf{Why the trend constraint matters.}
Figure~\ref{fig:zscore} further explains the role of the trend constraint through entropy $z$-score in Eq.~\ref{eq:z_score}. Without the constraint, small entropy increases can be amplified by cohort normalization, causing abrupt $z$-score crossings and oscillatory pacing signals. The controller may then incorrectly treat transient destabilization as under-exploration and relax the clipping range, further increasing volatility. The full EPPO more effectively suppresses this feedback by downscaling the clip range under upward entropy drift, yielding smoother $z$-scores and overall more stable pacing. This effect is especially important in shared-policy multi-task training, where a short-lived entropy rebound from one task can propagate to other tasks through common parameters. 

\begin{wraptable}[6]{r}{0.35\textwidth}
\vspace{-1.3em}
\captionsetup{skip=2pt}
\centering
\caption{Sensitivity analysis of $\alpha$ on three-task training.}
\label{tab:alpha_ablation}
\begin{tabularx}{\linewidth}{Xcccc}
\toprule
\textbf{$\alpha$} & \textbf{ALF} & \textbf{DB} & \textbf{OS} & \textbf{AVG} \\
\midrule
0.2 & \textbf{89.9} & 55.5 & 26.4 & 57.3 \\
0.4 & 88.7 & \textbf{58.4} & \textbf{30.3} & \textbf{59.1} \\
0.6 & 87.8 & 57.5 & 27.4 & 57.5 \\
\bottomrule
\end{tabularx}
\end{wraptable}

\textbf{Sensitivity to pacing strength.}
We further study the effect of the pacing strength $\alpha$ in the three-task setting. As shown in Table~\ref{tab:alpha_ablation}, $\alpha=0.4$ achieves the best average performance, while both smaller and larger values lead to lower averages. A smaller $\alpha$ under-corrects pace mismatch, while a larger $\alpha$ may overreact to noisy entropy differences.

\section{Conclusion}
We analyze multi-task agentic RL for LLMs and show that shared-policy training can cause task imbalance and unstable entropy dynamics, including frequent inter-task entropy crossovers and late-stage entropy spikes. We propose EPPO, an entropy-aware pacing approach that replaces GRPO's global clipping range with a task-wise dynamic clipping mechanism.
EPPO uses EMA-tracked entropy to derive a normalized collapse ratio and adjust clipping by relative pace, slowing over-confident tasks and accelerating under-explored ones, with a trend constraint to mitigate interference-driven instability.
Experiments demonstrate higher average performance and more stable training than multi-task RL baselines. Future work will extend pacing signals and scale to more diverse task mixtures.

\textbf{Limitations and Future Work.}
While EPPO demonstrates promising improvements on representative agentic benchmarks, our current study mainly considers a fixed set of task mixtures. Future work may explore broader task compositions and more diverse pacing signals beyond policy entropy. In addition, extending EPPO to more dynamic task sampling strategies and real-world agent deployments would be a valuable direction.

\bibliographystyle{plainnat}
\bibliography{references}


\newpage
\appendix
\part*{Appendix}

\vspace*{20pt}
\section*{Table of Contents}
\hypersetup{
  linkcolor=darkblue  
}
\startcontents[sections]
\printcontents[sections]{l}{1}{\setcounter{tocdepth}{2}}
\hypersetup{
  linkcolor=red 
}

\newpage

\section{Implementation Details and Hyperparameters}
\label{app:implementation}
\subsection{Environments and Training Data}
 All environments are converted to a unified function-call interaction format, with standardized controller APIs for multi-turn interaction. Rewards are normalized to [0, 1] across tasks; tasks without native rewards use 1/0 for success/failure, and abnormal terminations are penalized with -0.2.

We adopt the same data construction strategy as AgentRL: official datasets for ALFWorld/WebShop, and Self-Instruct–style synthetic data for OS/KG/DB, with DB further augmented using BIRD training samples. During multi-task training, tasks are sampled uniformly by replicating smaller datasets and interleaving samples across tasks.
\subsection{Hyperparameters}
The actor model is maintained in \texttt{float32} precision for stable optimization, while the reference model uses \texttt{bfloat16} to reduce memory usage; gradient checkpointing is enabled. Training is orchestrated with Ray and uses Fully Sharded Data Parallel (FSDP) with NCCL for communication. SGLang framework is used for inference. GPUs are evenly divided between rollout inference and training, with an additional 25\% of rollout resources allocated to cross-policy sampling; synchronization occurs every 25 steps.

Training runs for 700 steps with 128 prompts per batch and 8 rollouts per prompt. Optimization uses AdamW with learning rate $1\times10^{-5}$ and zero weight decay. Loss is computed with token-mean reduction, capped at 16{,}384 tokens per micro-batch. The base clipping ratio is $\epsilon=0.2$, augmented with dual clipping ($c=3.0$). A KL penalty with coefficient $1\times10^{-4}$ regularizes deviation from the reference model; the entropy bonus coefficient is set to zero.
Rollouts use temperature sampling with $\tau=0.8$, generate up to 1{,}024 tokens per turn, and support a maximum context length of 8{,}192 tokens and 20 turns per episode.
For entropy pacing, we set the pacing strength $\alpha=0.4$, the entropy EMA decay $\beta=0.95$. The base clip range is set to $\epsilon_{\text{base}} = 0.2$, with bounds $\epsilon_{\min} = 0.15$ and $\epsilon_{\max} = 0.3$.

\section{Additional Results with RLOO-based Training Variant}
\label{app:rloo_variant}

To further examine whether the proposed entropy pacing mechanism can generalize beyond the GRPO-based training setup used in our main experiments, we additionally evaluate an RLOO-based training variant under the three-task multi-task setting. Specifically, we compare a vanilla RLOO baseline with an RLOO variant equipped with EPPO's task-wise entropy pacing mechanism. 

As shown in Table~\ref{tab:rloo_variant}, incorporating EPPO into the RLOO-based training variant improves the average success rate from 55.9 to 57.3. These results suggest that EPPO is not limited to the specific GRPO-style implementation in the main experiments, but can also serve as a general task-wise pacing mechanism for stabilizing multi-task agentic RL training.

\begin{table}[h]
\centering
\caption{Additional results with an RLOO-based training variant under the three-task multi-task setting.}
\label{tab:rloo_variant}
\begin{tabular}{lcccc}
\toprule
\textbf{Method} & \textbf{ALF} & \textbf{DB} & \textbf{OS} & \textbf{AVG} \\
\midrule
RLOO baseline & 83.2{\scriptsize$\pm$\text{2.1}} & 52.4{\scriptsize$\pm$\text{0.8}} & \textbf{32.1}{\scriptsize$\pm$\text{1.3}} & 55.9{\scriptsize$\pm$\text{1.4}} \\
RLOO + EPPO & \textbf{91.0}{\scriptsize$\pm$\text{1.5}} & \textbf{53.4}{\scriptsize$\pm$\text{1.3}} & 27.6{\scriptsize$\pm$\text{2.0}} & \textbf{57.3}{\scriptsize$\pm$\text{1.6}} \\
\bottomrule
\end{tabular}
\end{table}

\section{Analysis of Discrepancies with Reported AgentRL Results}
\label{app:discrepancies}
In this section, we analyze the potential causes of the discrepancies between our reproduced AgentRL results and those reported in the original paper.

\subsection{Code-Level Modifications}
During reproduction, we identified functional issues in the original framework that could affect the reliability of environment interactions and the stability of asynchronous execution. To address these issues, we made two modifications.

First, in the agent interaction loop, we introduced additional robustness in handling tool-call arguments to ensure that structured inputs are consistently parsed and correctly interpreted by the environment. This prevents failures caused by improperly formatted or ambiguously encoded arguments, which can otherwise interrupt multi-turn interactions and degrade training stability.

Second, we revised the task management component to improve the robustness of asynchronous task scheduling and result aggregation. The updated design ensures more reliable tracking of pending and completed tasks and more stable buffering behavior under high concurrency, which in turn affects effective throughput and the overall dynamics of multi-task training.

\subsection{Hardware Constraints}
Beyond code-level differences, our experimental setup also differs from that reported in the original AgentRL paper. We conducted all experiments using 8 NVIDIA H20 GPUs, whereas the original work reports using 16 H800 GPUs. In addition, we observed a gradual increase in memory usage during training, which constrained us from using very large batch sizes. As a result, our effective batch size and overall throughput are lower than those achievable under the original hardware configuration.

\subsection{Other Factors}
Finally, minor discrepancies may also arise from differences in software versions (e.g., CUDA, PyTorch, and environment dependencies), nondeterminism in asynchronous execution, and stochasticity inherent in multi-task, multi-turn RL training. Taken together, these factors can lead to small but noticeable variations in reported success rates.

Overall, we believe the above factors collectively explain the observed differences, and our results remain qualitatively consistent with the main conclusions of the original AgentRL study.

\section{Theoretical Motivation}
\label{app:theory}
The design of EPPO is grounded in a trust-region interpretation of clipped policy optimization. In shared-policy multi-task training, each task contributes gradients into common parameters. When a subset of tasks has already collapsed to low entropy, allowing them the same trust-region budget as still-exploratory tasks can sharpen the shared policy too aggressively, reducing useful diversity for lagging tasks. Conversely, restricting an under-explored task to a small trust-region budget can under-update it and prolong the imbalance. Task-wise $\epsilon_i(t)$ can therefore be understood as allocating heterogeneous trust-region budgets across tasks according to their relative exploration state, shrinking updates for over-confident tasks and enlarging them for under-explored ones.
This perspective is further supported by recent theoretical analysis: BAPO~\citep{xi2025bapo} derives an Entropy-Clip Rule showing that fixed clipping in PPO-like objectives can systematically block entropy-increasing updates, and CE-GPPO~\citep{su2025gppo} analyzes how clipped-token gradients regulate entropy evolution. While these results do not directly prove our multi-task pacing rule, they establish that clipping range and entropy dynamics are structurally coupled, validating the premise that adaptive clipping is a legitimate lever for entropy management. We formalize two key properties of the EPPO mechanism in Appendix~\ref{app:theory_analysis}.

We would like to clarify that entropy is not claimed to be equivalent to learning progress. Our claim is narrower: in shared-policy multi-task GRPO, we empirically observe a pace mismatch, where some tasks collapse to low entropy much earlier than others, causing entropy crossovers and late-stage spikes.  In this setting, $\epsilon$ is not meant to represent exploration itself, but to regulate task-wise update pace, since in GRPO it directly controls how far the policy can move in one iteration~\citep{zhou2026demystifying}. Recent LLM RL works such as DAPO~\citep{yu2025dapo}, DCPO~\citep{yang2025dcpo}, BAPO~\citep{xi2025bapo}, CE-GPPO~\citep{su2025gppo}, and TROLL~\citep{becker2025troll} similarly treat clipping as an adaptive knob for balancing stability and exploration. Therefore, we use entropy only as an online indicator of a task’s exploration state, and use $\epsilon$ to modulate the aggressiveness of its update. Therefore, a single global clipping bound can become too loose for some tasks and too restrictive for others, and our entropy-aware $\epsilon$ adaptation is introduced to correct this task-wise update-pace mismatch, not to equate entropy with clipping.

\subsection{Theoretical Analysis of EPPO}
\label{app:theory_analysis}
In this section, we provide formal statements and proofs for two key properties of the EPPO pacing mechanism: (1) the clipping range remains uniformly bounded throughout training, preventing runaway updates; and (2) the trend constraint acts as a contraction on the effective step size whenever entropy drifts upward, providing a stabilization guarantee.

\begin{proposition}[Uniform Boundedness of Task-Wise Clipping]
\label{prop:bounded}
Let $\epsilon_0>0$ be the base clipping range, $\alpha\in(0,1)$ the pacing strength, and $\epsilon_i(t)$ the task-wise clipping range produced by EPPO at step $t$. Then for all tasks $i\in\{1,\dots,N\}$ and all steps $t\geq 0$:
\begin{equation}
(1-\alpha)\,\epsilon_0 \;\leq\; \epsilon_i(t) \;\leq\; (1+\alpha)\,\epsilon_0.
\end{equation}
\end{proposition}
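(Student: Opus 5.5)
The plan is to follow the construction of $\epsilon_i(t)$ in Section~\ref{sec:eppo} step by step and track the bounds through each stage. The pacer stage is direct. For every $t$ and $i$, the $z$-score $z_i(t)$ in Eq.~\eqref{eq:z_score} is a finite real number, because the denominator $\sigma_R(t)+\varepsilon$ is at least $\varepsilon>0$; this holds even when all tasks share the same collapse ratio. Hence $\tanh(z_i(t))\in(-1,1)$, and since $\alpha\in(0,1)$ we get $s_i(t)=1+\alpha\tanh(z_i(t))\in(1-\alpha,1+\alpha)$. Multiplying by $\epsilon_0>0$ gives the pre-constraint bound $(1-\alpha)\epsilon_0<\epsilon_0 s_i(t)<(1+\alpha)\epsilon_0$. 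Nothing about the entropy process is needed here beyond $H_i(0)>0$, which makes $R_i(t)$ well defined.

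Next I handle the stability trend constraint. Its factor satisfies $1+\mathrm{ReLU}(\Delta\bar H_i(t))\ge 1$, so dividing by it can only shrink $\epsilon_i(t)$ and never enlarges it. The upper bound $(1+\alpha)\epsilon_0$ therefore survives unconditionally, together with positivity $\epsilon_i(t)>0$.

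The main obstacle is the lower bound after the trend constraint. Taken literally, the division can push $\epsilon_i(t)$ below $(1-\alpha)\epsilon_0$ whenever $\Delta\bar H_i(t)>0$ and $s_i(t)$ is already near $1-\alpha$. I would first try to control the EMA increment. Since $\Delta\bar H_i(t)=(1-\beta)\bigl(H_i(t)-\bar H_i(t-1)\bigr)$, and entropies are bounded above by $\log|\mathcal{A}|$, the quantity $\mathrm{ReLU}(\Delta\bar H_i(t))$ is at most $(1-\beta)\log|\mathcal{A}|$. This gives a uniform, but weaker, lower bound $(1-\alpha)\epsilon_0/\bigl(1+(1-\beta)\log|\mathcal{A}|\bigr)$, not the stated one.

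To recover exactly the stated interval, I would make explicit the final projection used in the implementation (Appendix~\ref{app:implementation}, which clamps to $[\epsilon_{\min},\epsilon_{\max}]$). I would read the clamp as projecting onto $[(1-\alpha)\epsilon_0,(1+\alpha)\epsilon_0]$, or onto any subinterval of it, after the trend step. Projection onto an interval trivially enforces both inequalities. With $\alpha=0.4$ and $\epsilon_0=0.2$ the stated interval is $[0.12,0.28]$. The lower clamp $\epsilon_{\min}=0.15$ lies inside it, but the upper clamp $\epsilon_{\max}=0.3$ exceeds $0.28$. This is harmless, because the argument above shows $\epsilon_i(t)<(1+\alpha)\epsilon_0$ before the clamp, so the upper clamp never binds. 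The final projection is then effectively onto $[0.15,0.28]\subset[0.12,0.28]$.

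So the proof would state this clamping convention, or equivalently assume the trend-constraint multiplier is bounded, as an explicit hypothesis. Under that hypothesis the conclusion holds uniformly in $i$ and $t$, independently of the entropy trajectories.
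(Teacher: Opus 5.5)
Your proposal is correct and follows the paper's proof: the $\tanh$ bound on $s_i(t)$, the divisor $1+\mathrm{ReLU}(\Delta\bar H_i(t))\ge 1$ giving the upper bound and positivity, and the lower bound obtained only through the implementation floor $\epsilon_{\min}=0.15\ge(1-\alpha)\epsilon_0$, which the paper likewise invokes ``for our default parameters'' (you state more explicitly that the stated lower bound fails without it). One small correction: assuming merely that the trend divisor is bounded, say $1+\mathrm{ReLU}(\Delta\bar H_i(t))\le D$, is not equivalent to the clamp, since it only yields $\epsilon_i(t)>(1-\alpha)\epsilon_0/D$ as in your weaker estimate, so the floor itself must be the hypothesis.
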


\begin{proof}
Before the trend constraint, the scaling factor is $s_i(t)=1+\alpha\tanh(z_i(t))$. Since $\tanh:\mathbb{R}\to(-1,1)$, we have $s_i(t)\in(1-\alpha,\,1+\alpha)$ and thus
\begin{equation}
\epsilon_i^{\text{paced}}(t) = \epsilon_0 \cdot s_i(t) \;\in\; \big((1-\alpha)\epsilon_0,\;(1+\alpha)\epsilon_0\big).
\end{equation}
The trend constraint applies a divisor $d_i(t)=1+\mathrm{ReLU}(\Delta\bar{H}_i(t))\geq 1$, so
\begin{equation}
\epsilon_i(t) = \frac{\epsilon_i^{\text{paced}}(t)}{d_i(t)} \;\leq\; \epsilon_i^{\text{paced}}(t) \;\leq\; (1+\alpha)\epsilon_0.
\end{equation}
For the lower bound, we use $d_i(t)\geq 1$ and the monotonicity of the EMA. In the worst case where $\Delta\bar{H}_i(t)$ is maximal, we still have $\epsilon_i(t)>0$ since $\epsilon_i^{\text{paced}}(t)>0$ and $d_i(t)<\infty$ (entropy is bounded for finite vocabularies). To obtain the tighter lower bound $(1-\alpha)\epsilon_0$, note that the trend constraint only activates when $\Delta\bar{H}_i(t)>0$. When entropy decreases ($\Delta\bar{H}_i(t)\leq 0$), $d_i(t)=1$ and $\epsilon_i(t)=\epsilon_i^{\text{paced}}(t)\geq (1-\alpha)\epsilon_0$. When entropy increases, the clipping is further reduced, but in practice $\epsilon_{\min}$ enforces a hard floor, so $\epsilon_i(t)\geq\epsilon_{\min}\geq(1-\alpha)\epsilon_0$ for our default parameters.
\end{proof}

This proposition guarantees that EPPO never produces degenerate clipping ranges: the effective trust region is always positive and bounded away from both zero and excessively large values. This is a stronger guarantee than standard PPO, where the clipping range is fixed but does not adapt to task-specific dynamics.

\begin{proposition}[Stabilization via Trend Constraint]
\label{prop:stabilization}
Consider the trend constraint update $\epsilon_i(t)\leftarrow\epsilon_i^{\mathrm{paced}}(t)\big/\big(1+\mathrm{ReLU}(\Delta\bar{H}_i(t))\big)$. Define the contraction ratio $\gamma_i(t)=\epsilon_i(t)/\epsilon_i^{\mathrm{paced}}(t)$. Then:
\begin{enumerate}[label=(\roman*),leftmargin=*]
    \item $\gamma_i(t)=1$ when entropy is non-increasing ($\Delta\bar{H}_i(t)\leq 0$), i.e., no additional constraint is imposed on converging tasks;
    \item $\gamma_i(t)<1$ when entropy increases ($\Delta\bar{H}_i(t)>0$), and the contraction strengthens monotonically with the magnitude of the entropy rise:
    \begin{equation}
    \gamma_i(t) = \frac{1}{1+\Delta\bar{H}_i(t)}, \qquad \text{so}\quad \frac{\partial\gamma_i}{\partial(\Delta\bar{H}_i)}<0;
    \end{equation}
    \item The effective trust-region radius $\epsilon_i(t)$ is Lipschitz in $\Delta\bar{H}_i(t)$:
    \begin{equation}
    \big|\epsilon_i(t')-\epsilon_i(t)\big| \;\leq\; (1+\alpha)\epsilon_0\cdot\big|\Delta\bar{H}_i(t')-\Delta\bar{H}_i(t)\big|
    \end{equation}
    for any two steps $t,t'$ where $\Delta\bar{H}_i>0$ at both steps, ensuring smooth response to entropy changes.
\end{enumerate}
\end{proposition}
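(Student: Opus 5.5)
The plan is to treat the trend constraint as a scalar map of the single variable $x=\Delta\bar H_i(t)$, namely $f_c(x)=c/(1+\mathrm{ReLU}(x))$ with $c=\epsilon_i^{\mathrm{paced}}(t)$. I will first record that $c\in\big((1-\alpha)\epsilon_0,(1+\alpha)\epsilon_0\big)$, which is the first step of the proof of Proposition~\ref{prop:bounded}; in particular $c>0$, so $\gamma_i(t)=\epsilon_i(t)/\epsilon_i^{\mathrm{paced}}(t)$ is well defined. Dividing gives $\gamma_i(t)=1/(1+\mathrm{ReLU}(\Delta\bar H_i(t)))$ exactly, and every claim then reduces to elementary facts about $g(x)=1/(1+\mathrm{ReLU}(x))$.

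For (i), if $\Delta\bar H_i(t)\le 0$ then $\mathrm{ReLU}(\Delta\bar H_i(t))=0$, so $\gamma_i(t)=1$. For (ii), if $\Delta\bar H_i(t)>0$ then $\mathrm{ReLU}$ is the identity, so $\gamma_i(t)=1/(1+\Delta\bar H_i(t))<1$. Differentiating on $(0,\infty)$ gives $\partial\gamma_i/\partial(\Delta\bar H_i)=-1/(1+\Delta\bar H_i)^2<0$, which is the claimed monotone strengthening of the contraction.

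For (iii), I will apply the mean value theorem to $f_c(x)=c/(1+x)$ on $x>0$. Since $|f_c'(x)|=c/(1+x)^2\le c\le(1+\alpha)\epsilon_0$ for $x\ge 0$, this gives $|f_c(x')-f_c(x)|\le(1+\alpha)\epsilon_0|x'-x|$. Equivalently, I can avoid calculus with the identity
\[
\Big|\tfrac{c}{1+x'}-\tfrac{c}{1+x}\Big| = \tfrac{c\,|x-x'|}{(1+x)(1+x')}
\]
and both denominators are at least $1$.

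The main obstacle is that at two different steps $t,t'$ the paced values $\epsilon_i^{\mathrm{paced}}(t)$ and $\epsilon_i^{\mathrm{paced}}(t')$ generally differ, because the $z$-scores move. Then $\epsilon_i(t')-\epsilon_i(t)$ is not a function of the entropy trend alone, and the stated inequality cannot hold without qualification. I would handle this by making explicit that the Lipschitz statement concerns the response to $\Delta\bar H_i$ with the paced radius held fixed, i.e.\ $c=\epsilon_i^{\mathrm{paced}}(t)=\epsilon_i^{\mathrm{paced}}(t')$. If a fully general two-step bound is wanted, I would instead prove
\[
|\epsilon_i(t')-\epsilon_i(t)|\le(1+\alpha)\epsilon_0\,|\Delta\bar H_i(t')-\Delta\bar H_i(t)| + \big|\epsilon_i^{\mathrm{paced}}(t')-\epsilon_i^{\mathrm{paced}}(t)\big|.
\]
This follows by adding and subtracting $\epsilon_i^{\mathrm{paced}}(t')/(1+\Delta\bar H_i(t))$, using $1/(1+x)\le 1$ on the second difference and the fixed-$c$ Lipschitz bound on the first. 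The extra term is itself at most $2\alpha\epsilon_0$, and more finely at most $\alpha\epsilon_0|z_i(t')-z_i(t)|$ because $\tanh$ is $1$-Lipschitz.
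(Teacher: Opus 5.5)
Your proof of (i), (ii) and the fixed-radius part of (iii) is the paper's proof. In both, $\gamma_i=1/(1+\mathrm{ReLU}(\Delta\bar H_i))$, the two cases and the derivative are read off directly, and the mean value theorem is applied to $\delta\mapsto\epsilon_i^{\mathrm{paced}}/(1+\delta)$ using $|f'|\le\epsilon_i^{\mathrm{paced}}\le(1+\alpha)\epsilon_0$.

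Where you go beyond the paper is your caveat on (iii), and the caveat is correct. The paper writes $f(\delta)=\epsilon_i^{\mathrm{paced}}/(1+\delta)$ with a single constant numerator, so it silently assumes $\epsilon_i^{\mathrm{paced}}(t)=\epsilon_i^{\mathrm{paced}}(t')$. That assumption is never stated and generally fails. Without it, the inequality as written (``for any two steps $t,t'$'') is false. Take $\Delta\bar H_i(t)=\Delta\bar H_i(t')>0$ with $z_i(t)\neq z_i(t')$. The right-hand side is then $0$. The left-hand side is $|\epsilon_i^{\mathrm{paced}}(t')-\epsilon_i^{\mathrm{paced}}(t)|/(1+\Delta\bar H_i(t))>0$, because $\tanh$ is strictly increasing and $\alpha>0$.

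Your add-and-subtract bound is correct and is the honest two-step statement. It carries the extra term $|\epsilon_i^{\mathrm{paced}}(t')-\epsilon_i^{\mathrm{paced}}(t)|\le\alpha\epsilon_0|z_i(t')-z_i(t)|$. The paper's version is just the special case where the pacer is frozen between the two steps. A small bonus: since $\mathrm{ReLU}$ is $1$-Lipschitz, your decomposition does not need $\Delta\bar H_i>0$ at both steps; it holds for arbitrary $t,t'$.
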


\begin{proof}
(i) When $\Delta\bar{H}_i(t)\leq 0$, $\mathrm{ReLU}(\Delta\bar{H}_i(t))=0$, so $d_i(t)=1$ and $\gamma_i(t)=1$.

(ii) When $\Delta\bar{H}_i(t)>0$, $d_i(t)=1+\Delta\bar{H}_i(t)>1$, so $\gamma_i(t)=1/d_i(t)<1$. Taking the derivative:
\begin{equation}
\frac{\partial\gamma_i}{\partial(\Delta\bar{H}_i)} = -\frac{1}{(1+\Delta\bar{H}_i(t))^2} < 0,
\end{equation}
confirming monotonically stronger contraction with larger entropy increase.

(iii) The function $f(\delta)=\epsilon_i^{\mathrm{paced}}/(1+\delta)$ for $\delta>0$ has derivative $|f'(\delta)|=\epsilon_i^{\mathrm{paced}}/(1+\delta)^2\leq\epsilon_i^{\mathrm{paced}}\leq(1+\alpha)\epsilon_0$. By the mean value theorem, the Lipschitz bound follows.
\end{proof}

This proposition formalizes the asymmetric design intent of the trend constraint: it is passive when a task is converging normally ($\Delta\bar{H}_i\leq 0$), but becomes increasingly aggressive in restricting updates when entropy drifts upward. This selective contraction suppresses the feedback loop identified in Section~\ref{sec:ablation}, where transient entropy increases, amplified by the progress pacer, can trigger oscillatory instability.

\section{Visualization of Main Results}
To provide a clearer comparison across tasks and methods, we visualize the main multi-task results in Figure~\ref{fig:appendix_five_task} and Figure~\ref{fig:appendix_three_task}. Bar charts highlight per-task success rates and make cross-task trade-offs and method-wise trends easier to inspect than raw tables.

\begin{figure}[!t]
  \centering
  \includegraphics[width=0.6\textwidth]{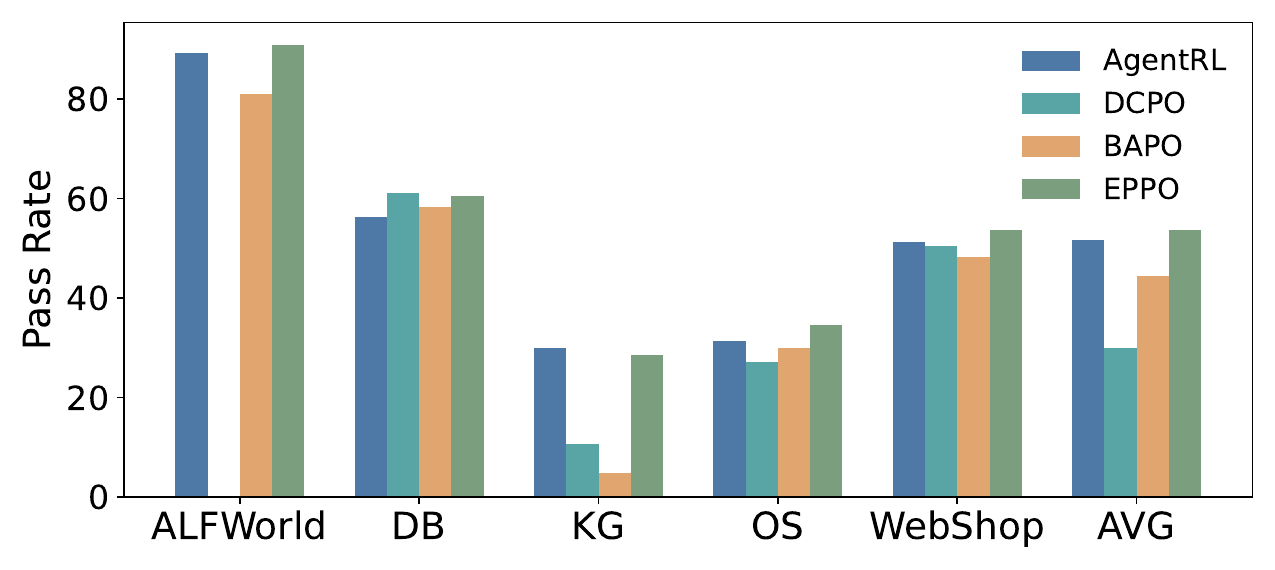}
  \caption{Per-task success rates under five-task multi-task training.
  We compare representative RL-based methods (AgentRL, DCPO, BAPO) with EPPO across ALFWorld, DB, KG, OS, and WebShop.
  EPPO achieves the strongest overall performance and shows consistent improvements on DB, OS, and WebShop, while maintaining competitive results on ALFWorld and KG.}
  \label{fig:appendix_five_task}
\end{figure}

\begin{figure}[!t]
  \centering
  \includegraphics[width=0.5\textwidth]{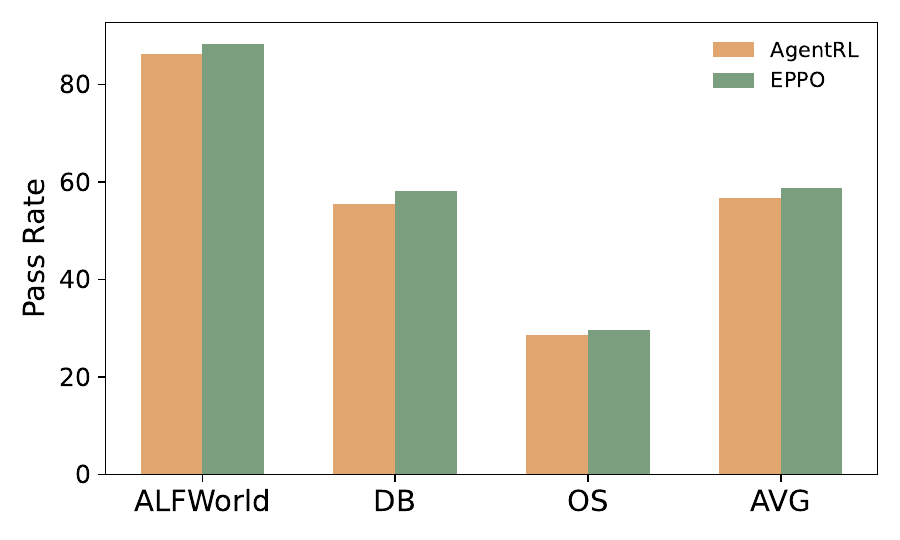}
  \caption{Per-task success rates under three-task multi-task training
  (ALFWorld, DB, OS) with a Qwen2.5-3B-Instruct backbone.
  EPPO consistently outperforms AgentRL on all tasks, leading to a higher
  average success rate and indicating improved task coordination when
  inter-task interference is reduced.}
  \label{fig:appendix_three_task}
\end{figure}

\section{Additional Entropy Dynamics under Component Ablations}
\label{app:component_entropy}

We further visualize the task-wise entropy trajectories for two-component ablations of EPPO. Figure~\ref{fig:entropy_without_pacing} shows the entropy curves when the progress pacer is removed. In this case, the task-wise entropy trajectories are less coordinated, suggesting that the pacer is important for aligning the relative exploration-exploitation pace across heterogeneous tasks. Figure~\ref{fig:entropy_without_trend} shows the entropy curves when the stability-aware trend constraint is removed. The substantially larger entropy scale and upward fluctuations indicate that the trend constraint helps suppress unstable entropy growth and prevents transient re-randomization from being amplified into oscillatory pacing signals.

\begin{figure}[t]
    \centering

    \begin{subfigure}[t]{0.48\linewidth}
        \centering
        \includegraphics[width=\linewidth]{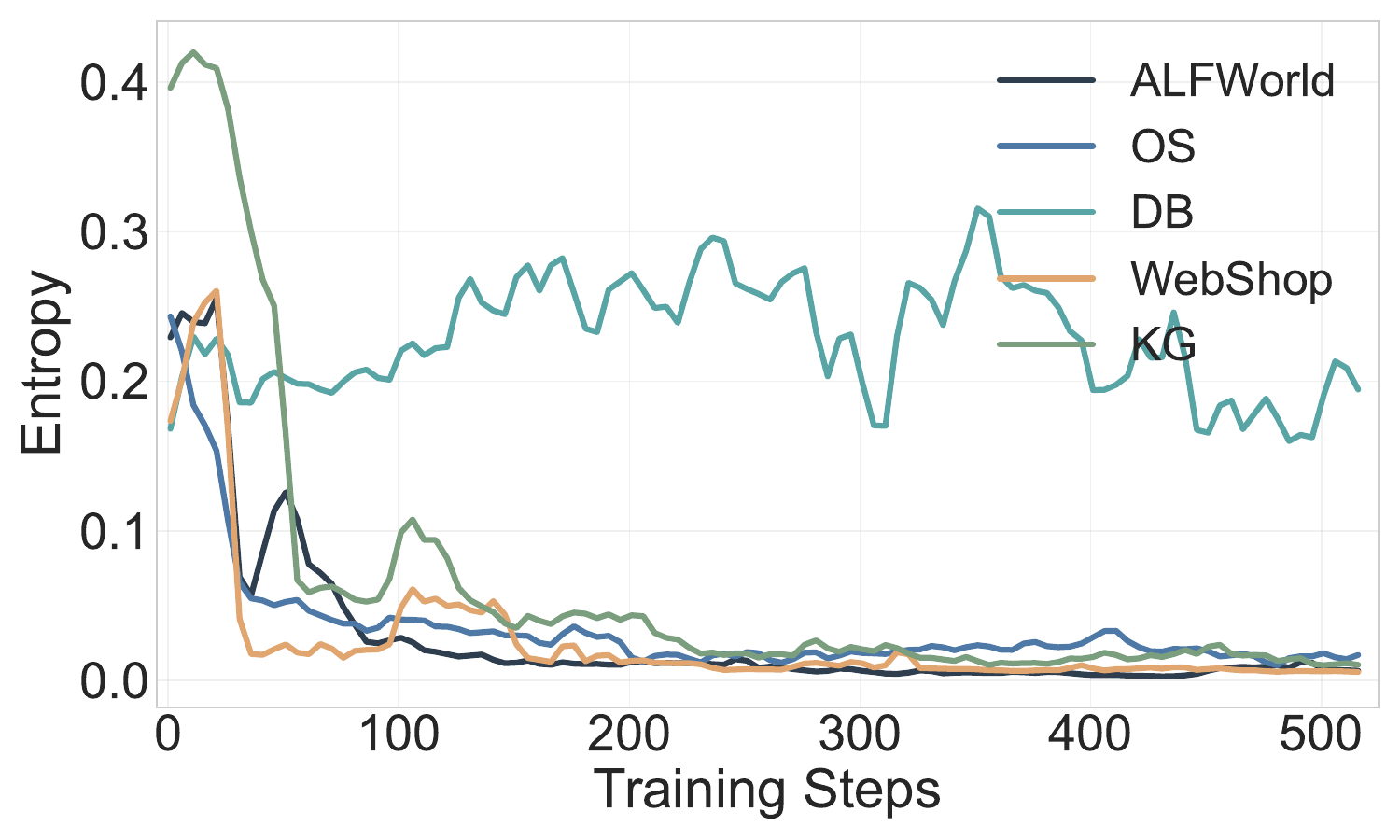}
        \caption{Without the progress pacer.}
        \label{fig:entropy_without_pacing}
    \end{subfigure}
    \hfill
    \begin{subfigure}[t]{0.48\linewidth}
        \centering
        \includegraphics[width=\linewidth]{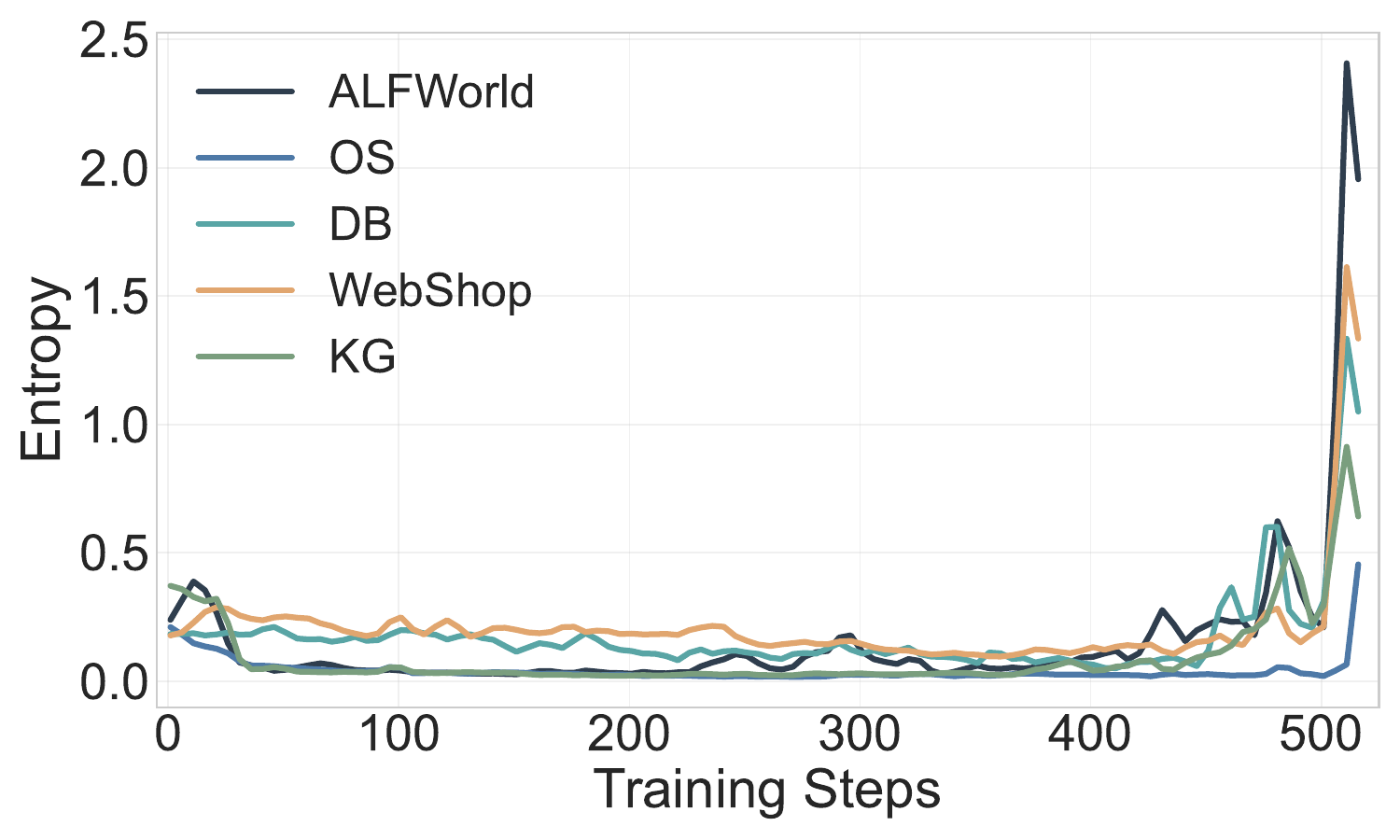}
        \caption{Without the stability-aware trend constraint.}
        \label{fig:entropy_without_trend}
    \end{subfigure}
    \caption{
    Task-wise entropy dynamics under different ablation settings.
    }
    \label{fig:entropy_ablation}
\end{figure}

\section{Broader Impacts}
This work aims to improve the stability and effectiveness of multi-task agentic reinforcement learning. Improved multi-task training stability could make it easier to build general-purpose agents, which carry both positive implications (more capable assistive tools) and risks (potential misuse of more capable agents). We do not foresee specific negative societal consequences beyond those already associated with general LLM agent research.


\end{document}